\documentclass{article}

\usepackage{microtype}
\usepackage{graphicx}
\usepackage{subcaption}
\usepackage{booktabs} 

\usepackage{hyperref}

\usepackage[accepted]{icml2026}

\usepackage{mathtools}
\usepackage{amsthm}
\usepackage{url}
\usepackage{amsfonts}       
\usepackage{amsmath}
\usepackage{amssymb}
\usepackage{nicefrac}       
\usepackage{microtype}      
\usepackage{xcolor}         
\usepackage{comment}
\usepackage{booktabs}
\usepackage{multirow} 
\usepackage{graphicx}

\usepackage[capitalize,noabbrev]{cleveref}

\theoremstyle{plain}
\newtheorem{theorem}{Theorem}[section]
\newtheorem{proposition}[theorem]{Proposition}
\newtheorem{lemma}[theorem]{Lemma}

\theoremstyle{definition}
\newtheorem{definition}[theorem]{Definition}

\theoremstyle{remark}
\newtheorem{remark}[theorem]{Remark}

\usepackage[textsize=tiny]{todonotes}

\icmltitlerunning{Anchor-guided Hypergraph Condensation with Dual-level Discrimination}

\DeclareMathOperator*{\argmin}{arg\,min}

\begin{document}

\twocolumn[
  \icmltitle{Anchor-guided Hypergraph Condensation with \\ Dual-level Discrimination}



  \icmlsetsymbol{equal}{*}

  \begin{icmlauthorlist}
    \icmlauthor{Fan Li}{unsw}
    \icmlauthor{Xiaoyang Wang}{unsw}
    \icmlauthor{Chen Chen}{szu}
    \icmlauthor{Wenjie Zhang}{unsw}
  \end{icmlauthorlist}

  \icmlaffiliation{unsw}{School of Computer Science and Engineering, University of New South Wales, Sydney, Australia}
  \icmlaffiliation{szu}{School of Artificial Intelligence, Shenzhen University, Shenzhen, China}

  \icmlcorrespondingauthor{Chen Chen}{chen\_chen@uow.edu.au}

  \icmlkeywords{Machine Learning, ICML}

  \vskip 0.3in
]


\printAffiliationsAndNotice{}  
\begin{abstract}
The increasing prevalence of large-scale hypergraphs poses significant computational challenges for hypergraph neural network (HNN) training. To address this, hypergraph condensation (HGC) distills large real hypergraphs into compact yet informative synthetic ones, beyond graph condensation (GC) methods limited to pairwise relations.
However, existing HGC methods rely on decoupled training architectures, where structure generators are pre-trained on the original hypergraph but not jointly optimized with condensed features during refinement, resulting in misaligned structures that degrade downstream utility. Moreover, trajectory-based optimization incurs substantial computational overhead in refinement, limiting condensation efficiency. 
To tackle these issues, we propose \textbf{A}nchor-guided \textbf{H}yper\textbf{G}raph \textbf{C}ondensation with \textbf{D}ual-level \textbf{D}iscrimination (\textbf{AHGCDD}), which consists of three key components: (1) a node initialization module based on Heat Kernel PageRank (HKPR) to encode structural knowledge into feature semantics; (2) an anchor-guided hyperedge synthesis strategy for joint optimization of condensed features and structure; (3) a theoretically grounded dual-level discrimination objective for utility-preserving condensation without redundant HNN training. Extensive experiments demonstrate the superior effectiveness and efficiency of AHGCDD.
\end{abstract}

\section{Introduction}
\label{sec:intro}

Hypergraphs provide a powerful tool for modeling higher-order interactions among multiple entities and have been successfully applied across diverse domains, including social analysis~\cite{do2020structural}, biochemistry~\cite{yang2023hgmatch}, and e-commerce~\cite{han2023search}.
Hypergraph Neural Networks (HNNs)~\cite{yadati2019hypergcn,chien2022you,saxena2024dphgnn} have achieved remarkable progress in hypergraph representation learning, owing to their strong capability to capture and leverage higher-order information.
However, the increasing prevalence of large-scale hypergraph data incurs prohibitive computational costs for HNN training~\cite{kim2023datasets,tangtraining2025}, which not only hinders practical deployment but also restricts the exploration of other promising directions in hypergraph learning, such as neural architecture search~\cite{lin2024hypergraph,antelmi2023survey} and continual learning~\cite{fu2023continual}.

To address the challenges brought by the scale of graph data, graph condensation (GC) has emerged as a promising data-centric solution, due to its high compression ratio and lossless performance~\cite{liu2024graph,gao2025graph}. It aims to synthesize and optimize a compact graph such that Graph Neural Networks (GNNs)~\cite{kipf2017semi,hamilton2017inductive} trained on it achieve performance comparable to those trained on the original full graph. To this end, existing GC methods typically optimize synthetic graphs by matching surrogate objectives with those of the original data, such as gradients~\cite{jingraph,yang2024does}, training trajectories~\cite{zheng2023structure,zhangnavigating}, and embeddings~\cite{liu2022graph,xiao2024simple}. By incorporating essential training knowledge from the original graph into the condensed data, GC facilitates efficient GNN training with only minor performance loss.

Despite the considerable progress achieved by graph condensation, extending these methods to hypergraphs remains non-trivial due to the need to model high-order interactions and handle the exponential complexity of hyperedges. To bridge this gap, HG-Cond~\cite{gong2025exploring} introduces the first hypergraph condensation framework with a decoupled training paradigm, consisting of Neural Hyperedge Linker (NHL) pre-training followed by an amelioration module. Specifically, the NHL is pre-trained via variational inference to capture high-order connectivity with linear complexity, while the amelioration module refines the hypergraph by matching multiple surrogate objectives. As a result, the condensed data substantially improves HNN training efficiency while maintaining strong effectiveness and generalization.

\begin{figure}[t]
\centering
\includegraphics[width=\columnwidth]{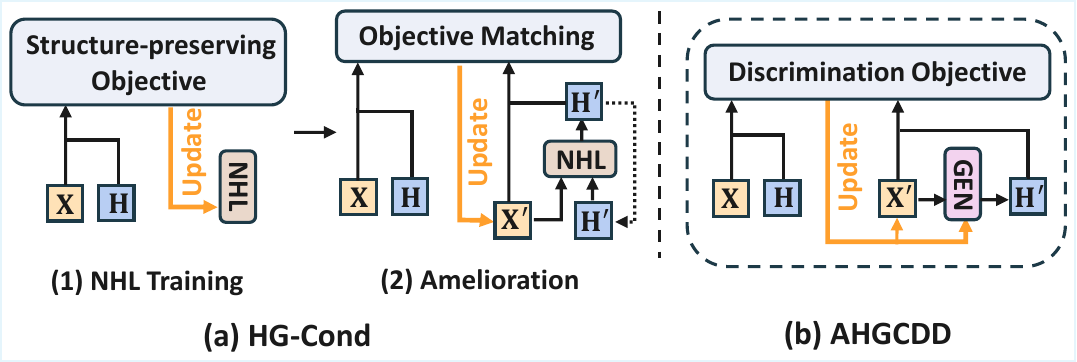} 
\caption{Learning pipelines of HG-Cond and AHGCDD.}
\label{fig:arch}
\vspace{-10pt}
\end{figure}

However, this decoupled architecture with matching-based amelioration exhibits two key limitations in hypergraph condensation. (1) \textit{\underline{Misaligned Structure Generation.}} As shown in Figure 1(a), HG-Cond decouples the optimization of condensed node features and the structure generator, i.e., the NHL. Although the NHL is pre-trained to preserve structural information by reconstructing the original hypergraph, it remains fixed during the amelioration stage, where the optimization focuses on aligning the model’s training behavior on the condensed graph with that on the original graph. This lack of joint optimization between the generated structure and condensed features leads to an objective misalignment, preventing the synthesized structure from effectively preserving training utility and ultimately degrading downstream performance.
(2) \textit{\underline{Resource-Intensive Optimization.}} During amelioration, HG-Cond refines the synthetic hypergraph using a Gradient–Parameter Synergistic Matching (GPSM) objective, which requires repetitive HNN retraining, incurring substantial computational overhead. In addition, the variational pre-training of NHL introduces significant memory consumption. These factors make HG-Cond resource-intensive and difficult to scale to large hypergraphs.

To address the above issues, we propose AHGCDD, a novel hypergraph condensation framework via anchor-guided hyperedge generation and dual-level discrimination. 
Specifically, we first design an HKPR-based node initialization module to adaptively integrate high-order information from local to global neighborhoods into feature semantics, yielding enriched condensed node features. On top of this, we introduce an anchor-guided hyperedge synthesis scheme, where each node sequentially acts as an anchor to induce a hyperedge.
Within each hyperedge, high-order interactions between the anchor and candidate nodes are synthesized based on feature-level distances parameterized by a learnable function.
This design enables end-to-end optimization of the condensed structure and features,
avoiding optimization misalignment while eliminating the need for costly pre-training of the structure generator.
In addition, for each anchor-induced hyperedge, we introduce a learnable sparsity threshold, enabling flexible and adaptive control of hyperedge density. During optimization, we propose a dual-level discrimination loss to preserve the training utility of condensed data, consisting of coarse-grained class prototype discrimination and fine-grained sample-wise discrimination. 
A dynamic weighting mechanism is employed to effectively balance the two loss terms during training, thereby improving convergence.
Theoretical analysis further justifies that our objective preserves the global distribution of the original hypergraph while achieving refined inter-class separability. 
By optimizing with the discrimination loss, we eliminate the need for repetitive HNN training, significantly improving condensation efficiency.
Our main contributions can be
summarized as follows:
\begin{itemize}
    \item We propose AHGCDD, a novel hypergraph condensation framework with an anchor-guided hyperedge generation scheme. To the best of our knowledge, it is the first unified architecture that jointly optimizes node features and high-order hypergraph structures.
    \item To generate structurally enriched condensed features, we design an HKPR-based node initialization module. On this basis, an anchor-guided hyperedge synthesis strategy is introduced to generate high-order interactions via feature-level associations, enabling joint optimization of condensed features and structures.
    \item We develop a dual-level discrimination loss to align training utility during optimization without reliance on HNN training. A comprehensive theoretical analysis justifies the effectiveness of the proposed objective.
    \item Extensive experiments demonstrate that AHGCDD outperforms state-of-the-art methods in effectiveness and generalization, while substantially improving condensation efficiency, achieving up to 144× speedup.
\end{itemize}

\section{Preliminary}


This section first introduces the notations used throughout this paper and then formalizes the hypergraph condensation problem. Background on hypergraph neural networks (HNNs) is provided in the Appendix~\ref{app:hnn}.

\noindent \textbf{Notations.} Let $\mathcal{T}=(\mathcal{V},\mathcal{E},\mathbf{X},\mathbf{Y})$ denote a hypergraph, where $\mathcal{V}= \{v_{i}\}_{i=1}^{|\mathcal{V}|}$ is the node set and $\mathcal{E}=\{e_{j}\}_{j=1}^{|\mathcal{E}|}$ is the hyperedge set, with $|\mathcal{V}|=N$ and $|\mathcal{E}|=M$. $\mathbf{X} \in \mathbb{R}^{N \times d}$ is the $d$-dimensional node feature matrix. $\mathbf{Y} \in \mathbb{R}^{N \times C}$ denotes the one-hot label matrix over $C$ classes.
The hypergraph structure is represented by an incidence matrix $\mathbf{H} \in \mathbb{R}^{N \times M}$, where each entry $h_{ij}=1$ if $v_{i} \in e_{j}$ and $h_{ij}=0$ otherwise. 
The diagonal matrices of node and hyperedge degrees are denoted by $\mathbf{D}_{v} \in \mathbb{R}^{N \times N}$ and $\mathbf{D}_{e} \in \mathbb{R}^{M \times M}$, respectively.

\noindent \textbf{Problem Definition.} Given a large-scale hypergraph $\mathcal{T}=(\mathbf{X},\mathbf{H},\mathbf{Y})$, the goal of hypergraph condensation is to learn a downsized synthetic hypergraph $\mathcal{S}=(\mathbf{X}^{\prime},\mathbf{H}^{\prime},\mathbf{Y}^{\prime})$ with $\mathbf{X}^{\prime} \in \mathbb{R}^{N^{\prime} \times d}, \mathbf{H}^{\prime} \in \mathbb{R}^{N^{\prime} \times M^{\prime}}$ and $\mathbf{Y}^{\prime} \in \mathbb{R}^{N^{\prime} \times C}$($N^{\prime} \ll N, M^{\prime} \ll M$), such that the HNN model trained on $\mathcal{S}$ can achieve comparable performance to that trained on the original large $\mathcal{T}$. 
Following~\cite{zheng2023structure,liugraph}, $\mathbf{Y}^{\prime}$ is pre-defined based on the class distribution of the original label space $\mathbf{Y}$.
The condensation objective can be formulated as a bi-level optimization problem:
\begin{equation}
\begin{split}
       \min_{\mathcal{S}} \;&\mathcal{L}(f_{\theta_{\mathcal{S}}}(\mathbf{H},\mathbf{X}),\mathbf{Y}) \\
        \text{s.t.}  \quad  \theta_{\mathcal{S}} = &\argmin_{\theta} \mathcal{L}(f_{\theta}(\mathbf{H}^{\prime},\mathbf{X}^{\prime}),\mathbf{Y}^{\prime}),
\end{split}
\end{equation}
where $\mathcal{L}$ denotes the loss function that measures the node classification error (e.g., cross-entropy), $f_{\theta}$ represents the HNN model parameterized by $\theta$, and $\theta_{\mathcal{S}}$ denotes the model parameters trained on the synthetic hypergraph.

\section{The Proposed Method: AHGCDD}

In this section, we present our proposed AHGCDD, which consists of three essential modules. An overview of the framework is shown in Figure~\ref{fig:arch}. The first module, termed the HKPR-based node initialization, incorporates higher-order structural dependencies from local to global neighborhoods into the node features. The second module, anchor-guided hyperedge generation, leverages feature-level relationships to synthesize high-order interactions with adaptive sparsity control. Finally, a dual-level discrimination module jointly optimizes the condensed features and structure via coarse-grained prototype discrimination and fine-grained instance-level discrimination, while incurring only minor computational overhead.

\begin{figure*}[t]
\centering
\includegraphics[width=\textwidth]{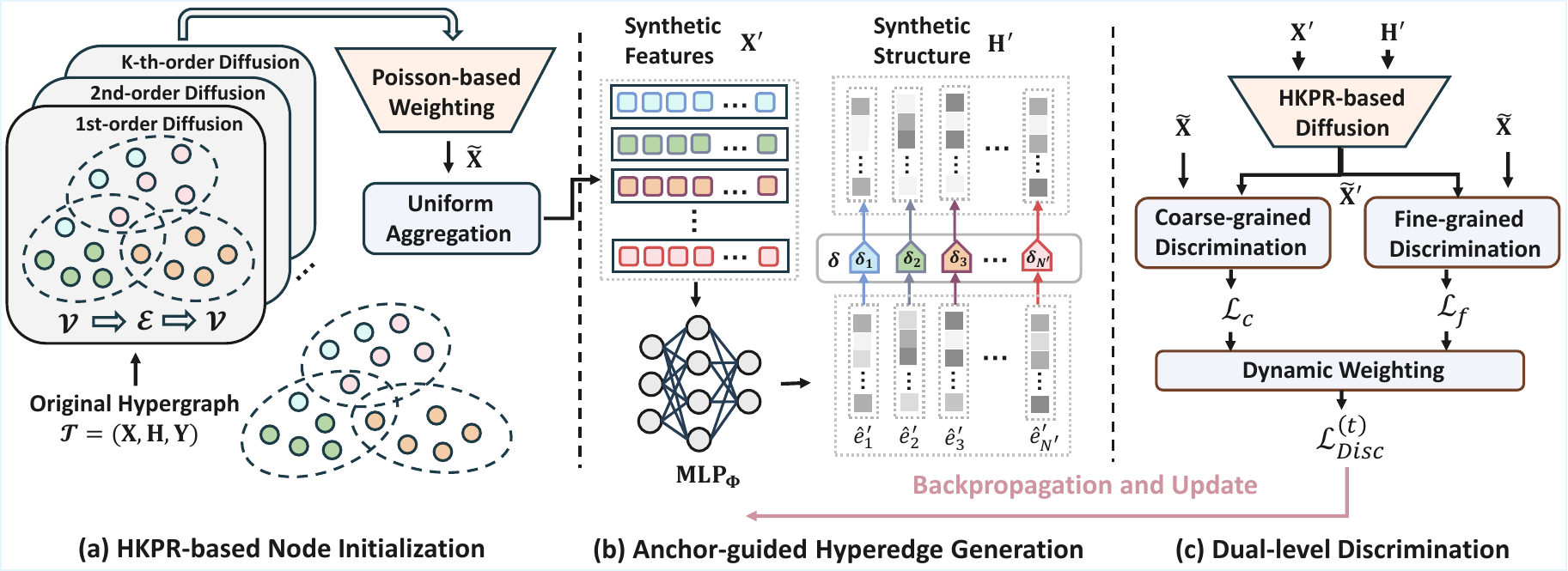} 
\caption{Overall framework of AHGCDD.}
\label{fig:arch}
\end{figure*}

\subsection{HKPR-based Node Initialization}

In the node initialization stage, we fuse high-order structural information with feature semantics into the condensed node features. By incorporating structural information at initialization, the enriched features serve as a more expressive starting point for condensed data optimization and lead to improved downstream utility. Moreover, these structure-aware features provide a solid foundation for the subsequent feature-driven hyperedge generation (Section~\ref{sec:anchor_edge_gen}) by supplying informative topological knowledge.

To adaptively capture multi-scale structural dependencies from local neighborhoods to the global context, we draw inspiration from Heat Kernel PageRank (HKPR)~\cite{chung2007heat} and propose a novel hypergraph diffusion mechanism. This mechanism aggregates paths of varying lengths between nodes, with their importance governed by the Poisson distribution, as follows:
\begin{equation}
\label{eq:hkpr}
\begin{split}
    \mathbf{\tilde{X}} &=\sum_{k=0}^{\infty}\frac{e^{-\lambda}\lambda^{k}}{k!}\cdot\mathbf{P}^{(k)}\mathbf{X}, \\
    \text{where} & \quad \mathbf{P}  = \mathbf{D}_{v}^{-\frac{1}{2}}\mathbf{H}\mathbf{D}_{e}^{-1}\mathbf{H^{T}}\mathbf{D}_{v}^{-\frac{1}{2}},
\end{split}
\end{equation}
where $\lambda \in \mathbf{N^{+}}$ is the parameter that controls the path importance. 
The following theorem indicates that HKPR-based diffusion performs exponential low-pass filtering on the hypergraph spectrum, suppressing high-frequency noise and promoting smooth structural information propagation. For brevity, we denote $\mathbf{P}_{\lambda} = \sum_{k=0}^{\infty}\frac{e^{-\lambda}\lambda^{k}}{k!}\cdot\mathbf{P}^{(k)}$.

\begin{theorem}
\label{thm:hkp_filter}
Let $\mathcal{L} = \mathbf{I} - \mathbf{P} \in \mathbb{R}^{n\times n}$ denote the normalized hypergraph Laplacian, with eigenvalues
$0=\mu_1 \le \mu_2 \le \cdots \le \mu_n \le 2$.
The HKPR-based diffusion
$\tilde{\mathbf{X}} = \mathbf{P}_{\lambda}\mathbf{X}$
can be expressed as a spectral filtering operation in the hypergraph Fourier domain:
\begin{equation}
\tilde{\mathbf{X}} = \mathbf{U}\, g(\mathbf{M})\, \mathbf{U}^{\top}\mathbf{X},
\end{equation}
where $\mathbf{U}=[\mathbf{u}_1,\ldots,\mathbf{u}_n]$ is the orthonormal eigenbasis of $\mathcal{L}$,
$\mathbf{M}=\mathrm{diag}(\mu_1,\ldots,\mu_n)$ is the Laplacian spectrum,
and $g(\mu)=e^{-\lambda \mu}$ is an exponentially decaying low-pass filter.
\end{theorem}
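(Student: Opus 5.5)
The plan is to diagonalize $\mathbf{P}$ through the Laplacian and then sum the Poisson series eigenvalue by eigenvalue. First, $\mathbf{P}=\mathbf{D}_{v}^{-1/2}\mathbf{H}\mathbf{D}_{e}^{-1}\mathbf{H}^{\top}\mathbf{D}_{v}^{-1/2}$ is real symmetric, because $\mathbf{D}_v$ and $\mathbf{D}_e$ are diagonal. Hence $\mathcal{L}=\mathbf{I}-\mathbf{P}$ is also symmetric, and the spectral theorem gives an orthonormal eigenbasis $\mathbf{U}$ with $\mathcal{L}=\mathbf{U}\mathbf{M}\mathbf{U}^{\top}$. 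It follows that $\mathbf{P}=\mathbf{U}(\mathbf{I}-\mathbf{M})\mathbf{U}^{\top}$.

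I read $\mathbf{P}^{(k)}$ as the $k$-th matrix power. Orthonormality ($\mathbf{U}^{\top}\mathbf{U}=\mathbf{I}$) then gives $\mathbf{P}^{k}=\mathbf{U}(\mathbf{I}-\mathbf{M})^{k}\mathbf{U}^{\top}$, by a one-line induction.

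For the spectral range, I would first show $\mathcal{L}\succeq 0$ from the quadratic form
\[
\mathbf{x}^{\top}\mathcal{L}\mathbf{x}=\tfrac12\sum_{e}\frac{1}{\delta(e)}\sum_{u,v\in e}\Big(\tfrac{x_u}{\sqrt{d(u)}}-\tfrac{x_v}{\sqrt{d(v)}}\Big)^2\ge 0 .
\]
The vector $\mathbf{D}_v^{1/2}\mathbf{1}$ is annihilated by $\mathcal{L}$, so $\mu_1=0$. For the upper end, $\mathbf{P}$ is similar to the row-stochastic matrix $\mathbf{D}_v^{-1}\mathbf{H}\mathbf{D}_e^{-1}\mathbf{H}^{\top}$, so its spectral radius is at most $1$, which gives $\mu_n\le 2$. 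In fact $\mathbf{P}\succeq 0$, since $\mathbf{P}=\mathbf{B}\mathbf{B}^{\top}$ with $\mathbf{B}=\mathbf{D}_v^{-1/2}\mathbf{H}\mathbf{D}_e^{-1/2}$, so actually $\mu_n\le 1$; the stated bound still holds. These facts are not needed for the identity itself, only to justify the ordering and the low-pass reading.

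Next I substitute into $\mathbf{P}_\lambda$ and pull $\mathbf{U}$ and $\mathbf{U}^\top$ out of the series. The series converges absolutely in operator norm because $\|\mathbf{P}\|\le 1$ and $\sum_k \lambda^k/k!<\infty$. This yields $\mathbf{P}_\lambda=\mathbf{U}\,\mathrm{diag}\big(\sum_k e^{-\lambda}\lambda^k(1-\mu_i)^k/k!\big)\mathbf{U}^{\top}$. Each diagonal entry is an exponential series:
\[
e^{-\lambda}\sum_{k\ge0}\frac{(\lambda(1-\mu_i))^k}{k!}=e^{-\lambda}e^{\lambda(1-\mu_i)}=e^{-\lambda\mu_i}.
\]
Therefore $\mathbf{P}_\lambda=\mathbf{U}g(\mathbf{M})\mathbf{U}^{\top}$ with $g(\mu)=e^{-\lambda\mu}$, and multiplying on the right by $\mathbf{X}$ gives the claimed identity for $\tilde{\mathbf{X}}$.

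Finally, $g$ is positive and strictly decreasing on $[0,2]$ with $g(0)=1$, so it is a low-pass filter. It preserves the component along $\mathbf{u}_1$ and attenuates higher-frequency components exponentially in $\lambda\mu_i$.

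The only delicate step is interchanging the infinite sum with the conjugation by $\mathbf{U}$. I would handle it by norm convergence of the partial sums; in finite dimensions this is routine. A secondary subtlety is isolated nodes with zero degree, for which $\mathbf{D}_v^{-1/2}$ is undefined. I would assume all degrees are positive, as is implicit in the definition of $\mathbf{P}$.
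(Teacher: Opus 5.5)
Your proof is correct and takes essentially the same route as the paper's: both diagonalize in the orthonormal eigenbasis of $\mathcal{L}$ and sum the Poisson-weighted exponential series. The paper first collapses the series to $e^{-\lambda}e^{\lambda\mathbf{P}}=e^{-\lambda\mathcal{L}}$ and then diagonalizes, whereas you write $\mathbf{P}=\mathbf{U}(\mathbf{I}-\mathbf{M})\mathbf{U}^{\top}$ first and sum the scalar series eigenvalue by eigenvalue; your extra checks (symmetry of $\mathbf{P}$ giving the orthonormal eigenbasis, $\mu_1=0$, the sharper bound $\mu_n\le 1$ from $\mathbf{P}=\mathbf{B}\mathbf{B}^{\top}$, and positive degrees) are all valid and fill in points the paper leaves implicit.
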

The proof of Theorem~\ref{thm:hkp_filter} can be found in Appendix~\ref{app:c_1}.
However, directly computing Eq.~(\ref{eq:hkpr}) is intractable, as it entails an infinite-order diffusion process.
In practice, we employ a truncated approximation with a finite order $K$ as:
\begin{equation}
\label{eq:trunc_possion}
\begin{split}
\tilde{\mathbf{X}} \approx \sum_{k=0}^{K} &\frac{e^{-\lambda}\lambda^{k}}{k!} \cdot \mathbf{P}^{(k)}\mathbf{X}.
\end{split}
\end{equation}
We next discuss the principle for choosing the truncation order $K$. Note that the diffusion step in HKPR follows a Poisson distribution, and our truncated approximation aims to retain contributions from high-probability step lengths while discarding the tail event. The following lemma provides an upper bound on the Poisson tail probability.
\begin{lemma}
\label{lemma_possion}
Let $N \sim \mathrm{Poisson}(\lambda)$ with parameter $\lambda>0$. 
Then for any $t>0$, we have
\begin{equation}
\mathbb{P}\!\left[N \ge \lambda + t\sqrt{\lambda}\right]
\;\le\;
\exp\!\left(
-\frac{t^2}{2 + t/\sqrt{\lambda}}
\right).
\end{equation}
\end{lemma}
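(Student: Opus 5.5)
We prove the bound with the standard Chernoff (exponential moment) method, followed by the usual Bennett-to-Bernstein simplification. For every $\theta>0$, Markov's inequality applied to $e^{\theta N}$ gives
\begin{equation*}
\mathbb{P}[N \ge a] \le e^{-\theta a}\,\mathbb{E}[e^{\theta N}] = \exp\!\left(\lambda(e^{\theta}-1) - \theta a\right),
\end{equation*}
where $a=\lambda+t\sqrt{\lambda}$. The identity $\mathbb{E}[e^{\theta N}]=\exp(\lambda(e^\theta-1))$ follows directly from the Poisson series $\sum_k e^{-\lambda}\lambda^k e^{\theta k}/k!$.

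The first step optimizes over $\theta$. The minimizer is $\theta=\log(a/\lambda)=\log(1+u)$ with $u=t/\sqrt{\lambda}>0$. Substituting it yields the Bennett-type bound
\begin{equation*}
\mathbb{P}[N\ge \lambda+t\sqrt{\lambda}] \le \exp\!\big(-\lambda\, h(u)\big), \qquad h(u)=(1+u)\log(1+u)-u .
\end{equation*}

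The second step, which carries the only real content, is the elementary inequality
\begin{equation*}
h(u)\ge \frac{u^2}{2(1+u/3)} \ge \frac{u^2}{2+u}, \qquad u\ge 0 .
\end{equation*}
The second inequality is trivial, since $2+2u/3\le 2+u$. In fact only the weaker bound $h(u)\ge u^2/(2+u)$ is needed. To prove it, set $\phi(u)=h(u)-u^2/(2+u)$. Then $\phi(0)=0$ and $\phi'(u)=\log(1+u)-\frac{u^2+4u}{(2+u)^2}$, so $\phi'(0)=0$. One checks that $\phi''(u)=\frac{1}{1+u}-\frac{8}{(2+u)^3}\ge 0$, which is equivalent to $(2+u)^3\ge 8(1+u)$, i.e.\ $u^3+6u^2+4u\ge 0$, true for $u\ge 0$. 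Integrating twice gives $\phi\ge 0$. This derivative computation is the one place I expect care to be needed; if the algebra gets messy, I will fall back on comparing the Taylor coefficients of $h$ and $u^2/(2+u)$.

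Finally, substituting $u=t/\sqrt{\lambda}$ gives
\begin{equation*}
\lambda h(u)\ge \frac{\lambda u^2}{2+u} = \frac{t^2}{2+t/\sqrt{\lambda}},
\end{equation*}
which is exactly the claimed bound in Lemma~\ref{lemma_possion}. Two remarks: no integrality of $\lambda$ is needed, and the case $a$ non-integer is covered automatically, since Markov's inequality applies to any real threshold.
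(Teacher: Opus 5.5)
Your proof is correct and self-contained, but it takes a different route from the paper. The paper derives nothing itself. It quotes a packaged Poisson Chernoff bound from Goldreich, $\Pr[N\ge\lambda+x]\le\exp\bigl(-x^2/(2(\lambda+x))\bigr)$, and substitutes $x=t\sqrt{\lambda}$. You instead start from the moment generating function, optimize at $\theta=\log(1+u)$ to get the Bennett bound $\exp(-\lambda h(u))$, and then prove the elementary inequality $h(u)\ge u^2/(2+u)$ by a convexity argument. Your computations check out: $\phi'(u)=\log(1+u)-\frac{u^2+4u}{(2+u)^2}$ and $\phi''(u)=\frac{1}{1+u}-\frac{8}{(2+u)^3}$, and the reduction to $u^3+6u^2+4u\ge 0$ is right. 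The unproven Bernstein inequality $h(u)\ge u^2/(2(1+u/3))$ is harmless, since, as you say, you never use it.

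The paper's route is a one-liner, but it depends entirely on the constant in the quoted bound, and as quoted it falls short. Substituting $x=t\sqrt{\lambda}$ into $x^2/(2(\lambda+x))$ gives $t^2/(2+2t/\sqrt{\lambda})$, not $t^2/(2+t/\sqrt{\lambda})$. So the cited inequality only yields a strictly weaker tail bound than the lemma claims. Your derivation does deliver the stated constant, so your argument actually proves the lemma as written. It also exposes the sharper Bennett/Bernstein form, which could justify a tighter truncation order $K$.

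To repair the paper's route, one would need to cite a stronger packaged bound: either $\exp\bigl(-x^2/(2\lambda+x)\bigr)$ or the Bernstein form $\exp\bigl(-x^2/(2(\lambda+x/3))\bigr)$. Both are exactly what your calculus step establishes.
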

The proof of Lemma~\ref{lemma_possion} is provided in Appendix~\ref{app:c_possion_bound}. 
To achieve a better trade-off between approximation accuracy and computational efficiency,
we set $K=\lceil \lambda + 3\sqrt{\lambda} \rceil$ in this work.
Under this setting, the tail probability beyond $K$ decays exponentially and is negligible,
ensuring that the truncated diffusion captures the dominant diffusion steps
while controlling the computational cost.

Given the $N$ structure-aware features $\tilde{\mathbf{X}}$, we map them to $N'$ initialized nodes following~\cite{liadapting2025}. 
Specifically, for each synthetic node $v_i^{\prime}$ in class $c$, we uniformly sample a subset $\mathcal{S}_i$
of original nodes from the same class and aggregate their features via mean pooling:
\begin{equation}
\label{eq:uniaggr}
\mathbf{X}'_i = \frac{1}{|\mathcal{S}_i|}\sum_{j \in \mathcal{S}_i} \tilde{\mathbf{X}}_j,
\quad \mathcal{S}_i \subseteq \mathcal{I}_c,
\end{equation}
where $\mathcal{I}_c = \{i \mid y_i = c\}$ denotes the indices of original nodes in class $c$, and we fix the number of sampled nodes per synthetic node to $|\mathcal{S}_i| = s$.

\subsection{Anchor-guided Hyperedge Generation}
\label{sec:anchor_edge_gen}

As discussed in Section~\ref{sec:intro}, decoupling structure optimization from the utility-preserving refinement process may lead to the synthesis of spurious high-order interactions, thereby degrading data utility. To address this issue, we propose an anchor-guided hyperedge generation strategy, in which hyperedges are generated based on feature-level associations learned by a structure generator. This design naturally enables the joint optimization of condensed features and structure in the subsequent discrimination process (Section~\ref{sec:dual_loss}).

Specifically, each synthetic node $v_i^{\prime}$ is taken in turn as an anchor node. For a given anchor, we induce a weighted hyperedge $\hat{e}'_i$ by quantifying feature-driven associations between the anchor node and all remaining nodes, which are treated as candidate nodes.
These associations are modeled by a learnable function, and the resulting connectivity strengths form an incidence score vector $\mathbf{\hat{H}}^{\prime}_{i}$, defined as:
\begin{equation}
\label{eq:anchor_gen}
    \begin{split}
    \mathbf{\hat{H}}^{\prime}_{i} & = [\mathbf{\hat{h}}^{\prime}_{i,1}, \mathbf{\hat{h}}^{\prime}_{i,2}, \ldots, \mathbf{\hat{h}}^{\prime}_{i,N^{\prime}}]^{\top}, \\
    \mathbf{\hat{h}}^{\prime}_{i,j} & = \mathrm{sigmoid}\big(\mathbf{MLP_{\Phi}}([\mathbf{X}_{i}^{\prime}; \mathbf{X}_{j}^{\prime}])\big),
    \end{split}
\end{equation}
where $\mathbf{MLP_{\Phi}}$ denotes a multi-layer perceptron parameterized by $\mathbf{\Phi}$, and $[\cdot;\cdot]$ represents the concatenation operator. 
To prune weak node--hyperedge connections that can introduce redundant structure and distort message passing in small-scale condensed hypergraphs, we further introduce the anchor-adaptive thresholds $\boldsymbol{\delta}$ as:
\begin{equation}
\boldsymbol{\delta}
= [\boldsymbol{\delta}_{1}, \boldsymbol{\delta}_{2}, \ldots, \boldsymbol{\delta}_{N^{\prime}}]
\in \mathbb{R}^{N^{\prime}}.
\end{equation}
where each $\boldsymbol{\delta}_{i}$ denotes a learnable threshold for the corresponding hyperedge $\hat{e}'_i$. Unlike existing works that rely on a predefined global threshold~\cite{jingraph,gong2025exploring}, our anchor-adaptive thresholds eliminate the need for manual sparsity tuning and enable hyperedge-specific density adaptation, allowing for more flexible and expressive structure generation. Accordingly, for each hyperedge $\hat{e}'_i$, its sparsified incidence vector $\mathbf{H}^{\prime}_{i}$ is computed as:
\begin{equation}
\label{eq:icd}
\mathbf{H}^{\prime}_{i} = \mathrm{ReLU}\!\left(\mathbf{\hat{H}}^{\prime}_{i} - \boldsymbol{\delta}_{i}\right).
\end{equation}
In this manner, we bridge condensed features and high-order structural synthesis, enabling their joint optimization.

\subsection{Dual-level Discrimination Loss}
\label{sec:dual_loss}
 
To preserve the downstream utility of the original data, we propose to refine the synthetic hypergraph using a dual-level discrimination loss that captures category-level information at both coarse and fine granularities. This novel objective eliminates repetitive HNN training during condensation, leading to a significantly more efficient yet task-aware optimization process.
The pseudo code of AHGCDD is presented in Algorithm~\ref{alg:ahgcdd} in Appendix~\ref{app:algo}.

Given the condensed features $\mathbf{X}'$ and the anchor-guided synthetic structure $\mathbf{H}'$, we first compute the HKPR-based node representations $\tilde{\mathbf{X}}^{\prime}$ for $\mathcal{S}$ as:
\begin{equation}
\label{eq:hkpr_cond}
\tilde{\mathbf{X}}^{\prime} \approx \sum_{k=0}^{K} \frac{e^{-\lambda}\lambda^{k}}{k!} \cdot \mathbf{P}'^{(k)} \mathbf{X}', 
\end{equation}
where $\mathbf{P}' = \mathbf{D}_v'^{-\frac{1}{2}} \mathbf{H}' \mathbf{D}_e'^{-1} \mathbf{H}'^{\top} \mathbf{D}_v'^{-\frac{1}{2}}$
denotes the normalized propagation matrix over $\mathcal{S}$,
and $\mathbf{D}_v', \mathbf{D}_e'$ are the vertex and hyperedge degree matrices, respectively.
To preserve the class-wise distribution of the original data while enabling a global partition of decision boundaries across categories, we design a coarse-grained discrimination loss:
\begin{equation}
\label{eq:coarse_loss}
\begin{aligned}
\mathcal{L}_{c}
 = \sum_{i=1}^{C} & \left(
1 - \frac{\mathbf{C}_i^{\top} \cdot \mathbf{C}'_i}
{\lVert \mathbf{C}_i \rVert \, \lVert \mathbf{C}'_i \rVert}
\right)
+ \sum_{\substack{i \neq j}}^{C}
\frac{\mathbf{C}_i^{\top} \cdot \mathbf{C}'_j}
{\lVert \mathbf{C}_i \rVert \, \lVert \mathbf{C}'_j \rVert}, \\
& \mathbf{C} = \mathbf{Y}^{\top}\tilde{\mathbf{X}}, \quad
\mathbf{C}' = \mathbf{Y}^{\prime \top}\tilde{\mathbf{X}}^{\prime}
\end{aligned}
\end{equation}
where $\mathbf{C}$ and $\mathbf{C}^{\prime}$ denote the class-level prototypes of $\mathcal{T}$ and $\mathcal{S}$, respectively. While $\mathcal{L}_{c}$ enforces intra-class alignment and inter-class separability in a global view, it does not explicitly align the instance-level geometry between the original and condensed data within each class, which is crucial for shaping local decision boundaries. To complement $\mathcal{L}_{c}$, we introduce a fine-grained discrimination loss defined as:
\begin{equation}
\label{eq:fine_loss}
\begin{split}
\mathcal{L}_{f}
&= - \sum_{i \in \mathcal{V}'} 
\mathbb{E}_{\substack{
p \sim \mathcal{U}(P(i)) \\
Q(i) \sim \mathcal{U}(N(i))
}}
\Bigg[
\log \exp\!\left( \tilde{\mathbf{X}}^{\prime \top}_{i} \tilde{\mathbf{X}}_p \right) \\
& - \log \Bigg(
\exp\!\left( \tilde{\mathbf{X}}^{\prime \top}_{i}  \tilde{\mathbf{X}}_p \right)
+ \sum_{q \in Q(i)}
\exp\!\left( \tilde{\mathbf{X}}^{\prime \top}_{i} \tilde{\mathbf{X}}_q \right)
\Bigg)
\Bigg], \\
\text{with} & \quad P(i) = \{\, j \mid y_j = y'_i \,\}, \; N(i)=\{\, j \mid y_j \neq y'_i \,\} \\
\end{split}
\end{equation}
where $Q(i) \subset N(i)$ with $|Q(i)| = N_{\mathrm{neg}}$ denoting the number of negative samples, and $\mathcal{U}(\cdot)$ denotes uniform sampling. We now provide a detailed explanation of $\mathcal{L}_{f}$. For each synthetic node $v_i'$, we randomly sample one original node from the same class as a positive example and a set of original nodes from different classes as negative examples. The fine-grained loss encourages the representation of each condensed node to be closer to its positive sample while being separated from negative ones, thereby capturing instance-level discriminative patterns within each class and refining local decision boundaries in the condensed hypergraph.

During optimization, we adopt a coarse-to-fine training strategy, where condensation progresses from \emph{global distribution alignment} to \emph{local structural refinement}. In the early stages, emphasizing the coarse-grained discrimination loss
$\mathcal{L}_{c}$ encourages the condensed data to preserve the class-wise
distribution of the original hypergraph, thereby establishing well-separated global
decision regions. As training proceeds, the optimization gradually shifts toward the fine-grained loss $\mathcal{L}_{f}$, allowing condensed nodes to capture instance-level geometry within each class, which is crucial for refining local decision boundaries. Based on this intuition, we formulate a dynamically weighted objective as:
\begin{equation}
\label{eq:disc_loss}
\mathcal{L}_{Disc}^{(t)}=\cos\!\left( \frac{\pi t}{2T} \right)\mathcal{L}_{c}
    + \sin\!\left( \frac{\pi t}{2T} \right)\mathcal{L}_{f},
\end{equation}
where $T$ denotes the total number of condensation epochs.
As $t$ progresses, $\cos\!\left(\frac{\pi t}{2T}\right)$ decreases from 1 to 0, while
$\sin\!\left(\frac{\pi t}{2T}\right)$ increases from 0 to 1.
This gradual reweighting approach eliminates the need for additional hyperparameters
and prevents either objective from dominating the optimization.
Overall, this coarse-to-fine optimization strategy stabilizes training and yields a compact yet discriminative condensed dataset that preserves both global class-level separability and local
inter-class variation.

\noindent \textbf{Theoretical Understanding.} We theoretically analyze how the two components of the dual-level discrimination objective preserve the training utility of $\mathcal{S}$.

\noindent \textit{\underline{(i) Analysis on Coarse-grained Loss $\mathcal{L}_{c}$.}} For the intra-class prototype alignment term in $\mathcal{L}_{c}$, the following theorem shows that this objective can be interpreted as minimizing the maximum mean discrepancy (MMD) between $\mathcal{S}$ and $\mathcal{T}$ over the joint space of normalized structure-enriched features and labels, thereby explaining its role in encouraging class-wise distribution alignment.
\begin{theorem}
\label{thm:mmd}
Let $P$ and $Q$ denote the joint distributions of $(\bar{\mathbf{x}}, y)$ induced by the original
and condensed data, respectively, where $\bar{\mathbf{x}} = \tilde{\mathbf{x}} / \|\tilde{\mathbf{x}}\|$ denotes
the $\ell_2$-normalized features.
Define the feature map
$\phi(\bar{\mathbf{x}}, y) = \mathbf{e}_y \otimes \bar{\mathbf{x}} \in \mathbb{R}^{C d}$,
where $\mathbf{e}_y$ is the one-hot vector of label $y$ and $\otimes$ denotes the Kronecker
product.
Consider the linear kernel
\begin{equation}
k\big((\bar{\mathbf{x}}, y), (\bar{\mathbf{x}}', y')\big)
= \langle \phi(\bar{\mathbf{x}}, y), \phi(\bar{\mathbf{x}}', y') \rangle,
\end{equation}
minimizing the intra-class alignment term in $\mathcal{L}_c$ leads to the minimization of the squared MMD between $P$ and $Q$:
\begin{equation}
\mathrm{MMD}_k^2(P, Q)
= \left\| \mathbb{E}_P[\phi] - \mathbb{E}_Q[\phi] \right\|_2^2.
\end{equation}
\end{theorem}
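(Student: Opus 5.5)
We compute the kernel mean embeddings explicitly and then relate the squared MMD to the cosine terms class by class. Write $\pi_c = P(y=c)$ and $\pi'_c = Q(y=c)$ for the class priors, and $\mathbf{m}_c = \mathbb{E}_P[\bar{\mathbf{x}}\mid y=c]$, $\mathbf{m}'_c = \mathbb{E}_Q[\bar{\mathbf{x}}\mid y=c]$ for the class-conditional means of the normalized features. Since $\phi(\bar{\mathbf{x}},y)=\mathbf{e}_y\otimes\bar{\mathbf{x}}$ places $\bar{\mathbf{x}}$ in the $y$-th block of $\mathbb{R}^{Cd}$, linearity of expectation gives $\mathbb{E}_P[\phi] = \sum_c \pi_c\,\mathbf{e}_c\otimes \mathbf{m}_c$. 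The blocks are orthogonal, so
\begin{equation}
\mathrm{MMD}_k^2(P,Q) = \sum_{c=1}^{C} \left\| \pi_c \mathbf{m}_c - \pi'_c \mathbf{m}'_c \right\|_2^2 .
\end{equation}
This decomposes the MMD into $C$ independent per-class terms, matching the sum over $i$ in the intra-class part of $\mathcal{L}_c$.

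Next we connect each block to the prototypes. With $\mathbf{C}=\mathbf{Y}^\top\tilde{\mathbf{X}}$, the prototype $\mathbf{C}_c$ is the sum of class-$c$ features. We identify it, up to the positive scale $N\pi_c$, with the class mean, and likewise $\mathbf{C}'_c$ with $\mathbf{m}'_c$. Cosine similarity is invariant to positive rescaling, so $1-\cos(\mathbf{C}_c,\mathbf{C}'_c) = 1-\cos(\mathbf{m}_c,\mathbf{m}'_c)$.

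Here lies the main obstacle. $\mathcal{L}_c$ uses unnormalized $\tilde{\mathbf{x}}$, whereas the theorem uses $\bar{\mathbf{x}}$. I would handle this by reading the prototypes as aggregates of normalized features, since the theorem explicitly states $P,Q$ over $\bar{\mathbf{x}}$. I would note that this matches the unnormalized version exactly when the feature norms are constant within a class, and approximately otherwise.

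Finally we relate the cosine loss to the squared distance via the identity
\begin{equation}
\|\mathbf{a}-\mathbf{b}\|^2 = (\|\mathbf{a}\|-\|\mathbf{b}\|)^2 + 2\|\mathbf{a}\|\|\mathbf{b}\|\bigl(1-\cos(\mathbf{a},\mathbf{b})\bigr),
\end{equation}
applied with $\mathbf{a}=\pi_c\mathbf{m}_c$ and $\mathbf{b}=\pi'_c\mathbf{m}'_c$. The first term splits into two pieces:
\begin{itemize}
\item The target side $\mathbf{a}$ is fixed, since it depends only on the original data.
\item The label proportions $\pi'_c=\pi_c$ are matched by construction, because $\mathbf{Y}'$ is predefined from the class distribution of $\mathbf{Y}$.
\end{itemize}
So the norm term depends only on $\|\mathbf{m}'_c\|$, while the direction term is a positive multiple of exactly the $c$-th intra-class summand of $\mathcal{L}_c$. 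Moreover $\|\mathbf{m}_c\|,\|\mathbf{m}'_c\|\le 1$ because the features are unit vectors.

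Hence $\mathrm{MMD}^2$ equals a norm-mismatch term plus $\sum_c 2\pi_c^2\|\mathbf{m}_c\|\|\mathbf{m}'_c\|\,(1-\cos_c)$. Minimizing the intra-class term drives each $\cos_c\to1$. At that optimum, the MMD reduces to the norm discrepancy, and it vanishes precisely when the class means also agree in magnitude. Conversely, $\mathrm{MMD}=0$ forces each cosine term to zero. I would conclude that minimizing the intra-class alignment term minimizes the angular component of $\mathrm{MMD}^2$ and is necessary for $\mathrm{MMD}^2=0$, which is the sense in which the statement holds. If a direct upper bound is preferred, the identity gives $\mathrm{MMD}^2 \le \sum_c(\ldots)^2 + 2\sum_c \pi_c^2(1-\cos_c)$, since $\|\mathbf{m}_c\|\|\mathbf{m}'_c\|\le1$.
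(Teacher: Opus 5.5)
Your proof is correct. It shares its first half with the paper: both compute $\mathbb{E}_P[\phi]=\sum_c\pi_c\,\mathbf{e}_c\otimes\mathbf{m}_c$ and use block orthogonality with $\pi'_c=\pi_c$ to get $\mathrm{MMD}_k^2=\sum_c\pi_c^2\|\mathbf{m}_c-\mathbf{m}'_c\|^2$. The routes diverge at the link to the cosine loss.

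The paper applies $\|a-b\|^2=2-2a^\top b$ to the \emph{normalized} prototypes, so the alignment term equals $\tfrac12\sum_c\|\mathbf{C}_c/\|\mathbf{C}_c\|-\mathbf{C}'_c/\|\mathbf{C}'_c\|\|^2$ exactly. It then asserts that the normalized prototypes give the directions of $\boldsymbol{\mu}_c,\boldsymbol{\mu}'_c$ and calls this a ``surrogate'' for the class-wise MMD. The gap between distances of normalized means and $\|\boldsymbol{\mu}_c-\boldsymbol{\mu}'_c\|^2$ is never addressed.

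Your identity $\|a-b\|^2=(\|a\|-\|b\|)^2+2\|a\|\|b\|(1-\cos)$, applied to the unnormalized means, quantifies exactly that gap. $\mathrm{MMD}_k^2$ splits into a radial term $\sum_c\pi_c^2(\|\mathbf{m}_c\|-\|\mathbf{m}'_c\|)^2$, which the loss does not see, plus an angular term with weights $2\pi_c^2\|\mathbf{m}_c\|\|\mathbf{m}'_c\|\le2\pi_c^2$.

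The paper's route is shorter and gives a slack-free equality for the loss itself. Yours gives a genuine identity for $\mathrm{MMD}_k^2$, an upper bound, and the fact that alignment is necessary for $\mathrm{MMD}_k^2=0$. It also makes explicit that the theorem holds only in this weaker sense: the cosine term controls the direction of the class means, not their magnitude.

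State two details explicitly. First, the weights depend on $\|\mathbf{m}'_c\|$, so the unweighted $\sum_c(1-\cos_c)$ and your weighted angular term share a zero set and are linked by the bound. They are not equivalent objectives away from the optimum. Second, the cosines require $\mathbf{m}_c,\mathbf{m}'_c\neq\mathbf{0}$.

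You were right to flag that $\mathbf{C}=\mathbf{Y}^\top\tilde{\mathbf{X}}$ aggregates unnormalized features, and that the identification is exact when norms are constant within each class. The paper glosses over this by simply asserting that the prototypes aggregate normalized embeddings.
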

The proof of Theorem~\ref{thm:mmd} can be found in Appendix~\ref{app:c_2}.
By penalizing similarities between mismatched class prototypes, the second term in $\mathcal{L}_c$ discourages inter-class alignment, thereby enhancing class-level discriminability.
To formally characterize this effect, we define a margin-based separability metric, referred to as the \textit{class-level margin}, which quantifies how much more similar a class prototype is to its matched counterpart than to any mismatched one.
\begin{definition}[Class-level margin]
Let $u_i = \mathbf{C}_i / \|\mathbf{C}_i\|$ and $u_i' = \mathbf{C}'_i / \|\mathbf{C}_i'\|$ be the normalized class prototypes of
$\mathcal{T}$ and $\mathcal{S}$ for class $i$.
The class-level margin is defined as:
\begin{equation}
m_i = u_i^\top u_i' - \max_{j \neq i} u_i^\top u_j' ,
\end{equation}
\end{definition}
\begin{proposition}
\label{prop:margin}
Assume the total positive cross-class similarity is bounded by
$\sum_{i \neq j} [u_i^\top u'_j]_+ \le \varepsilon$, where $[x]_+ = \max(x,0)$.
Then the average class-level margin satisfies
\begin{equation}
\frac{1}{C} \sum_{i=1}^C m_i
\;\ge\;
\frac{1}{C} \sum_{i=1}^C u_i^\top u'_i
\;-\;
\frac{\varepsilon}{C}.
\end{equation}
\end{proposition}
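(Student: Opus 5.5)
The plan is to bound each margin $m_i$ from below term by term and then average over classes. By definition, $m_i = u_i^\top u_i' - \max_{j\neq i} u_i^\top u_j'$, so the only thing to control is the maximal mismatched similarity $\max_{j\neq i} u_i^\top u_j'$. I will assume $C\ge 2$, so that each maximum is over a nonempty set.

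\textbf{Step 1 (per-class bound).} For every real number $x$ we have $x \le [x]_+$. A maximum of nonnegative terms is at most their sum, and $[\cdot]_+$ is nondecreasing, so
\begin{equation*}
\max_{j\neq i} u_i^\top u_j' \;\le\; \Bigl[\max_{j\neq i} u_i^\top u_j'\Bigr]_+ \;=\; \max_{j\neq i}\,[u_i^\top u_j']_+ \;\le\; \sum_{j\neq i} [u_i^\top u_j']_+ .
\end{equation*}
It follows that $m_i \ge u_i^\top u_i' - \sum_{j\neq i}[u_i^\top u_j']_+$.

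\textbf{Step 2 (averaging).} Sum the Step 1 bound over $i=1,\dots,C$. The double sum $\sum_i\sum_{j\neq i}[u_i^\top u_j']_+$ is exactly the total positive cross-class similarity $\sum_{i\neq j}[u_i^\top u_j']_+$, and by hypothesis this is at most $\varepsilon$. Dividing by $C$ then gives
\begin{equation*}
\frac{1}{C}\sum_{i=1}^C m_i \;\ge\; \frac{1}{C}\sum_{i=1}^C u_i^\top u_i' - \frac{\varepsilon}{C},
\end{equation*}
which is the claimed inequality.

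The only delicate point is Step 1. We need the positive part because the raw sum of cross-similarities could be negative, in which case it would not dominate the maximum. Replacing each term by its positive part makes every summand nonnegative, so the sum does dominate the maximum. This is precisely why the hypothesis is stated with $[\cdot]_+$. Beyond that, the argument is a direct calculation. The result links this bound to the second term of $\mathcal{L}_c$: minimizing that term drives the cross-class similarities down and so makes $\varepsilon$ small, while minimizing the first term drives $u_i^\top u_i'$ up. Together these enlarge the average margin.
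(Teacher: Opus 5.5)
Your proof is correct and follows the paper's argument exactly: bound $\max_{j\neq i} u_i^\top u_j'$ by $\sum_{j\neq i}[u_i^\top u_j']_+$, subtract this from $u_i^\top u_i'$, and average over classes using the hypothesis. Your two-step justification of the per-class inequality (via $[\max_j a_j]_+ = \max_j [a_j]_+$) and your explicit assumption $C\ge 2$ are slightly more careful than the paper, which simply asserts that inequality.
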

The proof of this proposition is deferred to Appendix~\ref{app:c_3}.
\begin{remark}
    Proposition~\ref{prop:margin} indicates that the combined effect of maximizing intra-class similarity and minimizing inter-class similarity enlarges the average inter-class separation, leading to a more globally discriminative representation space. This global separability makes different class patterns easier to distinguish and facilitates downstream training on the condensed data.
\end{remark}

\textit{\underline{(ii) Analysis on Fine-grained Loss $\mathcal{L}_{f}$.}} For each synthetic node $i$, given a sampled positive original node $p \in P(i)$ and a set of negative original nodes $Q(i) \subset N(i)$, the corresponding fine-grained loss term $l_{i}$ can be written as:
\begin{equation}
l_{i}
=
- \log
\frac{
\exp\!\left( s_{i,p} \right)
}{
\exp\!\left( s_{i,p} \right)
+
\sum_{q \in Q(i)}
\exp\!\left( s_{i,q} \right)
}, \; s_{i,j} = \tilde{\mathbf{X}}_i^{\prime \top} \tilde{\mathbf{X}}_j .
\end{equation}
Here, $s_{i,j}$ denotes the inner-product similarity between a condensed node $v_i'$ and an original node $v_j$. 
To characterize the instance-level discriminative behavior of condensed data, we introduce the \emph{Mis-ranking Event}, which determines whether positive samples attain higher similarity scores than negative ones for each synthetic node.
\begin{definition}[Mis-ranking Event]
For a condensed node $i$ with a positive sample $p$ and a set of negative samples $Q(i)$,
the mis-ranking event is defined as:
\begin{equation}
\mathcal E_{i}
\triangleq
\big\{ \exists\, q \in Q(i) \;:\; s_{i,q} \ge s_{i,p} \big\},
\end{equation}
which indicates that at least one negative sample is ranked no lower than the positive one.
\end{definition}
The following proposition elucidates the relationship between the fine-grained discrimination objective and the probability of the Mis-ranking Event:
\begin{proposition}
\label{prop_event}
Given the node-wise fine-grained discrimination loss $\ell_i$,
the probability of the Mis-ranking Event $\mathcal E_{i}$ is upper-bounded by:
\begin{equation}
\Pr(\mathcal E_{i})
\;\le\;
\mathbb E\!\left[ e^{\ell_i} - 1 \right].
\end{equation}
\end{proposition}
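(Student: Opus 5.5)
The plan is to prove a pointwise inequality, $\mathbf 1[\mathcal E_i] \le e^{\ell_i}-1$, for every realization of the positive sample $p$ and the negative set $Q(i)$. Taking expectations over the uniform sampling in Eq.~(\ref{eq:fine_loss}) then gives $\Pr(\mathcal E_i)=\mathbb E[\mathbf 1[\mathcal E_i]] \le \mathbb E[e^{\ell_i}-1]$, which is the claim.

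First we rewrite the loss in closed form. Exponentiating the definition of $\ell_i$ gives
\begin{equation*}
e^{\ell_i} = \frac{e^{s_{i,p}}+\sum_{q\in Q(i)} e^{s_{i,q}}}{e^{s_{i,p}}},
\qquad\text{so}\qquad
e^{\ell_i}-1=\sum_{q\in Q(i)} \exp\!\left(s_{i,q}-s_{i,p}\right).
\end{equation*}
In particular $e^{\ell_i}-1\ge 0$ always holds, because it is a sum of positive terms.

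Next we compare this quantity with the indicator of $\mathcal E_i$.
\begin{itemize}
\item On the complement of $\mathcal E_i$, the indicator is $0$, and the inequality holds by the nonnegativity just noted.
\item On $\mathcal E_i$, there is some $q^\star\in Q(i)$ with $s_{i,q^\star}\ge s_{i,p}$, so $\exp(s_{i,q^\star}-s_{i,p})\ge 1$. Every other summand is positive, hence $e^{\ell_i}-1\ge 1=\mathbf 1[\mathcal E_i]$.
\end{itemize}
Equivalently, $\mathcal E_i$ holds exactly when $\max_q s_{i,q}\ge s_{i,p}$, which forces $\ell_i\ge\log 2$. The argument is then a Markov-type bound applied to the nonnegative variable $e^{\ell_i}-1$ at threshold $1$.

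There is no real obstacle here. The only point needing care is to fix what the expectation ranges over: it is taken over the random draw of $p\sim\mathcal U(P(i))$ and $Q(i)\sim\mathcal U(N(i))$, with $\mathcal E_i$ defined on that same sample. Integrability is automatic, since the similarities are finite and $Q(i)$ is a finite set.
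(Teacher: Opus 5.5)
Your proof is correct and is essentially the paper's argument. The paper shows $\mathcal E_i \subseteq \{X \ge 1\}$ for $X=\sum_{q\in Q(i)}\exp(s_{i,q}-s_{i,p})$, applies Markov's inequality, and then uses the identity $X = e^{\ell_i}-1$; you unpack Markov into the pointwise bound $\mathbf 1[\mathcal E_i]\le e^{\ell_i}-1$ before taking expectations over the draw of $p$ and $Q(i)$.
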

The proof of proposition~\ref{prop_event} is provided in 
Appendix~\ref{app:prop_event}. 
\begin{remark}
Proposition~\ref{prop_event} implies that minimizing $\ell_i$ directly minimizes an upper bound on the probability that a negative sample
outranks the positive one.
As a result, $\mathcal{L}_f$ refines local decision boundaries around each condensed node
and alleviates confusion caused by negative samples with high similarity scores. This instance-level control enables the condensed data to better preserve the local discriminative geometry
of the original data, which cannot be captured by prototype-based alignment alone.
\end{remark}

\subsection{Complexity Analysis}

For the node initialization module, the time complexity of computing the HKPR-based diffusion in Eq.~(\ref{eq:trunc_possion}) is $\mathcal{O}(K M \delta_{e} d)$, where $\delta_{e}$ denotes the average hyperedge size. The complexity of uniform aggregation is $\mathcal{O}(N^{\prime}sd)$, and computing $\mathbf{C} = \mathbf{Y}^{\top}\tilde{\mathbf{X}}$ in Eq.~(\ref{eq:coarse_loss}) costs $\mathcal{O}(Nd)$. 
During condensation, anchor-guided hyperedge generation incurs a cost of $\mathcal{O}(L_{\mathbf{\Phi}}N^{\prime 2}d^{2})$, where $L_{\mathbf{\Phi}}$ is the number of layers in $\mathbf{MLP_{\Phi}}$. The computation of $\tilde{\mathbf{X}}^{\prime}$ has a complexity of $\mathcal{O}(KM^{\prime}\delta_{e}^{\prime}d)$, with $\delta_{e}^{\prime}$ denoting the average size of condensed hyperedges. The time complexities of $\mathcal{L}_{c}$ and $\mathcal{L}_{f}$ are $\mathcal{O}(N^{\prime}d+C^{2}d)$ and $\mathcal{O}(N^{\prime}N_{\mathrm{neg}}d)$, respectively. 
Considering $T$ optimization iterations, the overall time complexity of AHGCDD is
$\mathcal{O}\!\left(KM\delta_{e}d + T\!\left(L_{\mathbf{\Phi}}N^{\prime 2}d^{2}+N^{\prime}N_{\mathrm{neg}}d\right)\right)$.

\section{Experiments}

In this section, we present a comprehensive empirical evaluation to demonstrate the efficacy and efficiency of AHGCDD. Additional experimental results are provided in Appendix~\ref{app:exp_results}.

\begin{table}[t]
\caption{Statistics of datasets.}
\label{tab:stat}
\centering
\resizebox{\linewidth}{!}{\begin{tabular}{ccccccc}
\toprule
Dataset & \#Nodes & \#Edges & $\sum_{e \in E}|e|$ & \#Features & \#Classes & Description \\ \midrule
Cora & 2,708 & 1,579 & 7,494 & 1,433 & 7 & co-citation network       \\
Pubmed  & 19,717 & 7,963 & 54,346 & 500 & 3 & co-citation network    \\
DBLP-CA & 41,302 & 22,363 & 140,863 & 1,425 & 6 & co-authorship network\\
Walmart & 88,860 & 69,906 & 549,490 & 100 & 11 & co-purchase network\\
Yelp & 50,758 & 679,302 & 4,574,352 & 1,862 & 9 & co-occurrence network \\
MAG-PM & 153,009 & 192,180 & 1,138,160 & 256 & 22 & co-authorship network 
\\ \bottomrule
\end{tabular}}
\end{table}

\begin{table*}[t]
\caption{Effectiveness evaluation of the condensed data, mean accuracy (\%) $\pm$ standard deviation. 
\textbf{Bold} indicates the best performance, and \underline{underline} means the runner-up. OOM denotes the out-of-memory issue.}
\label{tab:effect}
\centering
\resizebox{\linewidth}{!}{
\begin{tabular}{cccccccccccc}
\toprule
\multicolumn{1}{c}{\multirow{2}{*}{\textbf{Dataset}}} 
& \multirow{2}{*}{\textbf{Ratio} ($r$)} 
& \multicolumn{3}{c}{\textbf{Traditional Coreset Methods}} 
&  
& \multicolumn{5}{c}{\textbf{Hypergraph Condensation Methods}} 
& \multirow{2}{*}{\textbf{Whole Dataset}} \\

\cmidrule(lr){3-5} \cmidrule(lr){7-11}

\multicolumn{1}{c}{} & 
& Random & Herding & K-Center 
&  
& HGCPA & HG-Cond-NHL & HG-Cond & AHGCDD-X & AHGCDD 
&  \\ 
\midrule

\multirow{3}{*}{Cora}
& 0.5\% 
& 39.68\scriptsize{$\pm$2.06} & 41.86\scriptsize{$\pm$4.57} & 43.07\scriptsize{$\pm$3.34}
& 
& 62.07\scriptsize{$\pm$3.81} & \underline{71.74\scriptsize{$\pm$3.72}} & 71.43\scriptsize{$\pm$3.22} & 69.25\scriptsize{$\pm$1.08} & \textbf{74.83\scriptsize{$\pm$1.95}}
& \multirow{3}{*}{77.90\scriptsize{$\pm$1.17}} \\

& 1\% 
& 43.99\scriptsize{$\pm$2.76} & 49.45\scriptsize{$\pm$3.32} & 44.49\scriptsize{$\pm$2.78}
& 
& 63.46\scriptsize{$\pm$2.90} & \underline{73.91\scriptsize{$\pm$1.18}} &  73.36\scriptsize{$\pm$3.26} & 72.32\scriptsize{$\pm$2.68} & \textbf{76.48\scriptsize{$\pm$1.58}}
&  \\

& 2.5\% 
& 53.38\scriptsize{$\pm$3.92} & 55.33\scriptsize{$\pm$3.45} & 49.19\scriptsize{$\pm$2.78}
& 
& 64.55\scriptsize{$\pm$2.05} & \underline{76.72\scriptsize{$\pm$1.76}} & 74.55\scriptsize{$\pm$2.36} & 75.13\scriptsize{$\pm$1.52} & \textbf{77.85\scriptsize{$\pm$2.21}}
&  \\ \midrule

\multirow{3}{*}{Pubmed}
& 0.5\% 
& 71.33\scriptsize{$\pm$1.97} & 75.80\scriptsize{$\pm$1.16} & 56.48\scriptsize{$\pm$1.31}
& 
& 66.43\scriptsize{$\pm$4.07} & 76.75\scriptsize{$\pm$0.36} & 76.92\scriptsize{$\pm$0.55} & \underline{77.42\scriptsize{$\pm$0.51}} & \textbf{78.66\scriptsize{$\pm$1.57}}
& \multirow{3}{*}{86.17\scriptsize{$\pm$0.52}} \\

& 1\% 
& 75.13\scriptsize{$\pm$0.92} & 78.01\scriptsize{$\pm$0.62} & 61.34\scriptsize{$\pm$1.55}
& 
& 68.85\scriptsize{$\pm$5.62} & 77.15\scriptsize{$\pm$0.40} & 77.65\scriptsize{$\pm$0.31} & \underline{78.70\scriptsize{$\pm$0.63}} & \textbf{79.57\scriptsize{$\pm$0.26}}
&  \\

& 2.5\% 
&77.69\scriptsize{$\pm$0.82} & \underline{80.42\scriptsize{$\pm$0.45}} & 65.56\scriptsize{$\pm$0.93}
& 
& 69.67\scriptsize{$\pm$4.38} & 77.83\scriptsize{$\pm$0.41} & 77.59\scriptsize{$\pm$1.31} & 78.85\scriptsize{$\pm$0.45} & \textbf{81.09\scriptsize{$\pm$0.35}}
&  \\ \midrule

\multirow{3}{*}{DBLP-CA}
& 0.1\% 
& 60.38\scriptsize{$\pm$2.05} & 63.87\scriptsize{$\pm$3.01} & 54.20\scriptsize{$\pm$3.94}
& 
& 83.63\scriptsize{$\pm$1.19} & 85.57\scriptsize{$\pm$0.46} & 85.78\scriptsize{$\pm$3.14} & \underline{86.34\scriptsize{$\pm$0.54}} & \textbf{86.37\scriptsize{$\pm$0.39}}
& \multirow{3}{*}{90.75\scriptsize{$\pm$0.26}} \\

& 0.5\% 
& 76.42\scriptsize{$\pm$1.16} & 80.17\scriptsize{$\pm$0.87} & 69.67\scriptsize{$\pm$1.65}
& 
& 84.72\scriptsize{$\pm$0.42} & 87.68\scriptsize{$\pm$0.42} & 86.66\scriptsize{$\pm$1.89} & \underline{88.14\scriptsize{$\pm$0.34}} & \textbf{88.80\scriptsize{$\pm$0.65}}
&  \\

& 1\% 
& 82.34\scriptsize{$\pm$1.05} & 84.69\scriptsize{$\pm$0.47} & 76.27\scriptsize{$\pm$1.50}
& 
& 85.06\scriptsize{$\pm$0.32} & 88.08\scriptsize{$\pm$0.27} & 88.30\scriptsize{$\pm$2.08} & \underline{88.36\scriptsize{$\pm$0.28}} & \textbf{89.06\scriptsize{$\pm$0.57}}
&  \\ \midrule

\multirow{3}{*}{Walmart}
& 0.1\% 
& 45.93\scriptsize{$\pm$2.35} & 47.40\scriptsize{$\pm$1.16} & 45.03\scriptsize{$\pm$2.31}
& 
& 47.55\scriptsize{$\pm$1.01} & 50.27\scriptsize{$\pm$1.52} & 50.31\scriptsize{$\pm$4.38} & \underline{57.66\scriptsize{$\pm$0.65}} & \textbf{62.77\scriptsize{$\pm$0.60}}
& \multirow{3}{*}{75.22\scriptsize{$\pm$0.39}} \\

& 0.5\% 
& 50.65\scriptsize{$\pm$1.36} & 55.96\scriptsize{$\pm$1.33} & 47.88\scriptsize{$\pm$1.83}
& 
& 48.61\scriptsize{$\pm$0.70} & 56.73\scriptsize{$\pm$1.09} & 54.13\scriptsize{$\pm$5.44} & \underline{63.67\scriptsize{$\pm$0.53}} & \textbf{67.92\scriptsize{$\pm$0.66}}
&  \\

& 1\% 
& 54.34\scriptsize{$\pm$2.01} & 60.66\scriptsize{$\pm$1.85} & 51.28\scriptsize{$\pm$3.03}
& 
& 50.57\scriptsize{$\pm$0.88} & 59.65\scriptsize{$\pm$1.22} & 56.68\scriptsize{$\pm$3.75} & \underline{66.06\scriptsize{$\pm$0.71}} & \textbf{69.09\scriptsize{$\pm$0.34}}
&  \\ \midrule

\multirow{3}{*}{Yelp}
& 0.05\% 
& 21.82\scriptsize{$\pm$1.56} & 21.78\scriptsize{$\pm$3.02} & 18.84\scriptsize{$\pm$1.94}
& 
& 22.60\scriptsize{$\pm$1.27} & OOM & OOM & \underline{23.67\scriptsize{$\pm$1.23}} & \textbf{25.43\scriptsize{$\pm$0.54}}
& \multirow{3}{*}{33.71\scriptsize{$\pm$0.24}} \\

& 0.25\% 
& 22.79\scriptsize{$\pm$1.71} & 23.01\scriptsize{$\pm$1.76} & 21.78\scriptsize{$\pm$3.75}
& 
& 23.75\scriptsize{$\pm$4.71} & OOM & OOM & \underline{24.64\scriptsize{$\pm$1.22}} & \textbf{26.80\scriptsize{$\pm$0.31}}
&  \\

& 0.5\% 
& 25.27\scriptsize{$\pm$0.37} & 23.83\scriptsize{$\pm$1.65} & 22.53\scriptsize{$\pm$0.96}
& 
& 25.93\scriptsize{$\pm$0.88} & OOM & OOM & \underline{26.19\scriptsize{$\pm$0.23}} & \textbf{27.09\scriptsize{$\pm$0.06}}
&  \\ \midrule

\multirow{3}{*}{MAG-PM}
& 0.05\% 
& 26.38\scriptsize{$\pm$3.76} & 30.29\scriptsize{$\pm$1.72} & 23.20\scriptsize{$\pm$4.16}
& 
& 36.22\scriptsize{$\pm$4.12} & 53.13\scriptsize{$\pm$0.29} &  50.83\scriptsize{$\pm$2.76} & \textbf{53.35\scriptsize{$\pm$0.46}} & \underline{53.16\scriptsize{$\pm$0.89}}
& \multirow{3}{*}{62.58\scriptsize{$\pm$0.05}} \\

& 0.25\% 
& 40.50\scriptsize{$\pm$0.96} & 43.79\scriptsize{$\pm$0.80} & 35.87\scriptsize{$\pm$0.97}
& 
& 44.82\scriptsize{$\pm$3.83} & 55.37\scriptsize{$\pm$0.35} &53.07\scriptsize{$\pm$1.82} & \underline{56.60\scriptsize{$\pm$0.26}} & \textbf{57.55\scriptsize{$\pm$0.54}}
&  \\

& 0.5\% 
& 43.38\scriptsize{$\pm$1.03} & 47.92\scriptsize{$\pm$0.30} & 41.11\scriptsize{$\pm$1.18}
& 
& 47.91\scriptsize{$\pm$3.54} & 56.25\scriptsize{$\pm$0.29} & 52.10\scriptsize{$\pm$2.33} & \underline{57.17\scriptsize{$\pm$0.37}} & \textbf{58.89\scriptsize{$\pm$0.61}}
&  \\ 
\bottomrule
\end{tabular}}
\end{table*}

\subsection{Experimental Settings}

\textbf{Datasets.} 
We use six widely used hypergraph
benchmarks across various scales, including Cora, Pubmed, DBLP-CA~\cite{yadati2019hypergcn}, Walmart, Yelp~\cite{chien2022you}, and MAG-PM~\cite{sinha2015overview}. For all datasets, we adopt a split of 50\%/25\%/25\% for training, validation, and testing, following previous HNN studies~\cite {chien2022you,wang2023hypergraph,tangtraining2025}. Dataset statistics can
be found in Table~\ref{tab:stat}.

\textbf{Baselines.} 
We compare our method with representative baselines, which fall into two categories:
\textbf{(1) Traditional coreset methods:} Random, Herding~\cite{welling2009herding}, and K-Center~\cite{sener2018active}; 
\textbf{(2) Hypergraph condensation methods:} HG-Cond and HG-Cond-NHL~\cite{gong2025exploring}. Specifically, HG-Cond-NHL fixes the hyperedges initialized by NHL and optimizes only node features, whereas HG-Cond updates the hypergraph structure during condensation with the pre-trained NHL.
Moreover, we extend the recent state-of-the-art structure-free graph condensation method GCPA~\cite{liadapting2025} to the hypergraph setting by incorporating hypergraph message passing in the precomputation stage. This extended variant is denoted as HGCPA in our evaluation.

The details of the evaluation protocol and hyperparameter
settings are provided in Appendix~\ref{app:eval_protocal} and~\ref{app:hyperparam}, respectively. Our code is available at \url{https://github.com/Coco-Hut/AHGCDD}.

\subsection{Overall Performance}

In this section, we conduct a comprehensive comparison of our AHGCDD against baselines across various reduction ratios for node classification, as presented in Table~\ref{tab:effect}. AHGCDD-X denotes the graph-less variant of AHGCDD, which fixes the condensed structure as the identity matrix $\mathbf{I}$ and refines only the synthetic features. We have the following key observations: (1) Overall, AHGCDD achieves superior performance compared to the baselines. The condensed data is highly comparable to the original dataset when used to train HNNs for node classification.
(2) Coreset selection methods generally underperform condensation-based approaches across most settings. In addition, HGCPA performs markedly worse than carefully designed HGC methods in most cases, indicating that existing state-of-the-art structure-free GC methods do not generalize well to hypergraph learning scenarios. These findings highlight the necessity of developing condensation methods specifically tailored for HNNs.
(3) HG-Cond-NHL achieves performance comparable to, or even better than, HG-Cond under different experimental settings, and HG-Cond exhibits substantially larger performance variance across runs. This is because the pre-trained structure generator NHL is not jointly optimized with the condensed node features to preserve training utility. As a result, the structure updates during condensation may become misaligned with the evolving features, introducing spurious high-order interactions and ultimately degrading data utility.
(4) AHGCDD outperforms its structure-free variant in nearly all settings, highlighting the importance of preserving hypergraph structure in condensed data for effective HNN training.

\subsection{Efficiency Comparison}


We now compare the running time and memory overhead of AHGCDD and HG-Cond during condensation.
As shown in Table~\ref{tab:efficiency}, our AHGCDD is significantly more efficient than HG-Cond, achieving speedups ranging from 28$\times$ to 144$\times$ across different datasets. For instance, condensing the large-scale MAG-PM dataset requires over 1100 seconds with HG-Cond, whereas AHGCDD completes the process within 30 seconds. This is mainly because, unlike HG-Cond, our framework eliminates the need for parameter matching and gradient matching as well as repeated training trajectory computations during condensation. Instead, AHGCDD adopts a lightweight discrimination loss, which significantly reduces computational overhead.
Moreover, our framework is substantially more memory-efficient: HG-Cond incurs over 10GB of GPU memory consumption on both Walmart and MAG-PM, while AHGCDD requires less than 3GB. On the large-scale and hyperedge-dense Yelp dataset, HG-Cond suffers from out-of-memory (OOM) issues, whereas AHGCDD runs successfully, demonstrating its superior scalability. The reduced memory consumption mainly stems from the simplified architecture of AHGCDD, which avoids training a variational-inference-based neural hyperedge linker with substantial memory overhead.

\begin{table}[t]
\caption{Efficiency comparison between HG-Cond and AHGCDD in terms of condensation time and memory consumption.}
\label{tab:efficiency}
\resizebox{0.95\linewidth}{!}{\begin{tabular}{cccccc}
\toprule
\multirow{2}{*}{Dataset} & \multicolumn{2}{c}{HG-Cond} &  & \multicolumn{2}{c}{AHGCDD} \\ 
\cmidrule{2-3} \cmidrule{5-6} 
                         & Time (s) & Mem (MB) &  & Time (s) & Mem (MB) \\ 
\midrule
Cora ($r$=1\%) & 519.2 & 1095 & & 3.6 & 362.9\\
Pubmed  ($r$=1\%) & 599.4 & 1553 & & 11.5 & 1063  \\
DBLP-CA ($r$=0.5\%) & 617.5 & 3651 & & 13.9 & 3101 \\
Walmart ($r$=0.5\%)& 827.9 & 10431& & 29.2 & 1529 \\
Yelp ($r$=0.25\%) & OOM & OOM & & 132.2 & 8379 \\
MAG-PM ($r$=0.25\%) & 1111.6	& 13137 & & 27.8 & 2905 \\
\bottomrule       
\end{tabular}}
\vspace{-4pt}
\end{table}

\subsection{Ablation Study}

\noindent \textbf{Ablation on Node Initialization.} 
We first investigate the initialization phase under different information propagation strategies.
Specifically, \textit{w/o HKPR} replaces HKPR with vanilla hypergraph message passing that does not use Poisson weights, whereas \textit{w/o Prop} completely removes structural information propagation.
In Figure~\ref{fig:ablation_main}(a), we observe that our HKPR-based method consistently outperforms the variant with vanilla hypergraph message passing, demonstrating that the proposed path-adaptive diffusion mechanism effectively denoises propagated information and leads to performance improvements. Meanwhile, removing information propagation results in a more pronounced performance degradation than \textit{w/o HKPR}, highlighting the crucial role of structure-aware node initialization in capturing comprehensive node semantics and improving data utility.

\begin{figure}[t]
	\centering
	\subfloat[Node initialization analysis]{
		\includegraphics[width=0.48\linewidth]{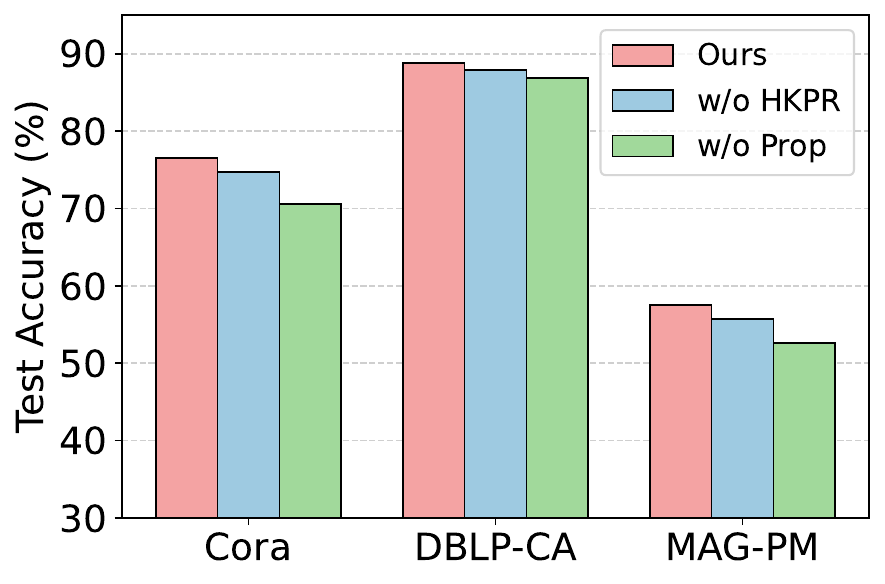}}
	\subfloat[Sparsity control analysis]{
		\includegraphics[width=0.49\linewidth]{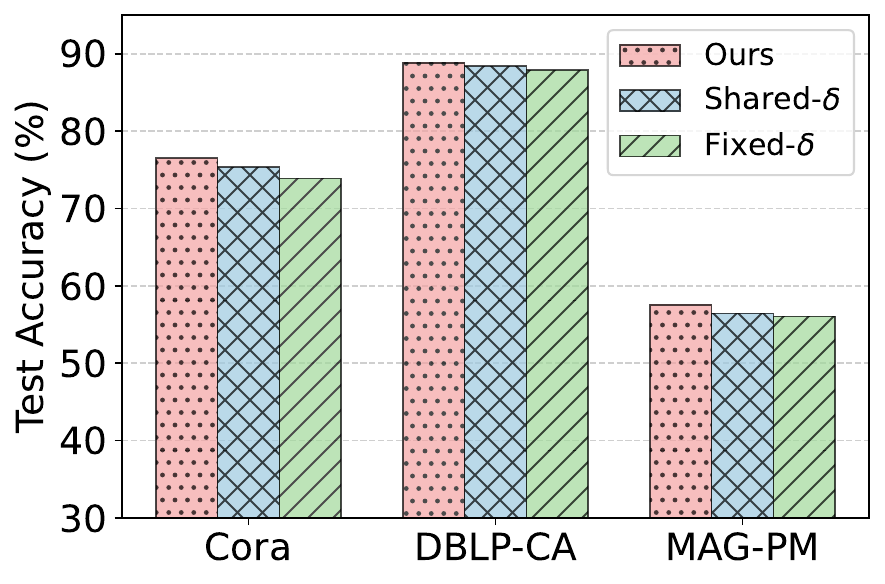}}
        \caption{Effect of HKPR-based node initialization and anchor-adaptive sparsity threshold, where $r$ is set to 1\%, 0.5\%, and 0.25\% for Cora, DBLP-CA, and MAG-PM, respectively.}
        \label{fig:ablation_main}
\end{figure}

\begin{table}[t]
\caption{Ablation study on discrimination loss.}
\label{tab:ablation_loss}
\centering
\resizebox{\linewidth}{!}{
\begin{tabular}{ccccc}
\toprule
\textbf{Dataset} 
& w/o $\mathcal{L}_{c}$ 
& w/o $\mathcal{L}_{f}$ 
& w/o Dynamic
& \textbf{Ours} \\
\midrule
Cora ($r$=1\%)     
& 72.67$\pm$1.33 
& 74.15$\pm$1.18
& 75.63$\pm$1.42 
& \textbf{76.48$\pm$1.58} \\
DBLP-CA ($r$=0.5\%)  
& 87.40$\pm$0.98
& 86.93$\pm$0.32
& 87.79$\pm$0.55
& \textbf{88.80$\pm$0.65} \\
MAG-PM ($r$=0.25\%) 
& 56.19$\pm$0.43
& 55.97$\pm$0.25 
& 56.84$\pm$0.81
& \textbf{57.55$\pm$0.54} \\
\bottomrule
\end{tabular}
}
\end{table}

\noindent \textbf{Ablation on Anchor-adaptive Threshold.} To study the contribution of our anchor-adaptive threshold, we compare it with two variants: \textit{Shared-$\delta$}, which employs a single learnable threshold for all anchors, and \textit{Fixed-$\delta$}, which uses a predefined constant threshold selected from $\{0.1, 0.2, \ldots, 0.9\}$. As shown in Figure~\ref{fig:ablation_main}(b), the \textit{Fixed-$\delta$} variant shows noticeably lower performance than the two learnable
variants across all datasets. This is because learnable thresholds can incorporate discriminative knowledge during optimization
and expand the search space. Moreover, the proposed anchor-adaptive threshold achieves superior performance
over the global learnable threshold (i.e., \textit{Shared-$\delta$}), demonstrating that allowing
anchor-specific hyperedge densities enables more flexible and fine-grained sparsity control.


\noindent \textbf{Ablation on Discrimination Loss.} Table~\ref{tab:ablation_loss} evaluates the impact of $\mathcal{L}_{c}$, $\mathcal{L}_{f}$, and the
dynamic weighting strategy. We observe that removing any individual component results in a
noticeable performance degradation, underscoring the necessity of each technique in the
condensation procedure. More concretely, using either the coarse-grained loss or the fine-grained loss
alone leads to inferior performance, indicating that these two objectives provide complementary
discriminative signals. In
addition, the dynamic weighting strategy consistently outperforms static weighting, further
demonstrating that our coarse-to-fine optimization trajectory facilitates more
effective convergence of the condensed data.

\section{Related Work}


In this section, we review related work on hypergraph size reduction, while the discussion of graph condensation is deferred to Appendix~\ref{app:related_work}.

\textbf{Hypergraph Size Reduction.} 
Existing hypergraph reduction methods can be broadly categorized into three classes: hypergraph coreset, coarsening, and condensation.
Coreset methods~\cite{welling2009herding,sener2018active} select a subset of original nodes while retaining the induced hyperedges.
Coarsening approaches~\cite{aghdaei2021hypersf,aghdaei2022hyperef} reduce hypergraph size by clustering nodes into supernodes, aiming to preserve global structural properties for more efficient downstream processing. For instance, HyperSF~\cite{aghdaei2021hypersf} adopts a local max-flow-based clustering strategy to minimize ratio cuts, whereas HyperEF~\cite{aghdaei2022hyperef} decomposes large hypergraphs into clusters with minimal inter-cluster hyperedges.
However, coarsening-based methods are generally limited to structure-only hypergraphs and are not applicable to attributed hypergraphs, preventing them from leveraging feature semantics.
HG-Cond~\cite{gong2025exploring} makes the first attempt to explore hypergraph condensation. The framework first trains a Neural Hyperedge Linker to capture high-order connectivity patterns via variational inference. Then, with the optimized structure generator, it utilizes a Gradient-Parameter Synergistic Matching objective to refine synthetic hypergraphs. Although this design significantly outperforms coreset- and coarsening-based methods, the decoupled architecture may result in the generation of misleading high-order interactions, and the gradient–parameter matching incurs substantial computational overhead.

\section{Conclusion}

In this paper, we present AHGCDD, a novel framework for efficient and effective hypergraph
condensation. 
It addresses the optimization misalignment issue in prior work by introducing an anchor-guided hyperedge generation scheme.
This strategy synthesizes density-adaptive hyperedges via HKPR-based higher-order node features, enabling unified optimization of condensed features and structure.
Moreover, we propose a dual-level discrimination loss with theoretical justification, which aligns the training utility of the condensed and original hypergraphs while eliminating the need for repetitive training trajectory optimization.
Extensive experiments on six real-world benchmarks verify the superiority of AHGCDD in terms of effectiveness and efficiency.  

\section*{Acknowledgements}

Xiaoyang Wang is supported by ARC DP240101322 and DP260100689. Wenjie Zhang is
supported by the Australian Research Council Centre of Excellence for Mathematical Modelling of
Cellular Systems CE230100001.

\section*{Impact Statement}

This paper presents work whose goal is to advance the field of machine learning. There are many potential societal consequences of our work, none of which we feel must be specifically highlighted here.


\bibliography{example_paper}

@article{khan2025heterogeneous,
  title={Heterogeneous hypergraph neural network for social recommendation using attention network},
  author={Khan, Bilal and Wu, Jia and Yang, Jian and Ma, Xiaoxiao},
  journal={TORS},
  volume={3},
  number={3},
  pages={1--22},
  year={2025},
  publisher={ACM New York, NY}
}

@inproceedings{feng2019hypergraph,
  title={Hypergraph neural networks},
  author={Feng, Yifan and You, Haoxuan and Zhang, Zizhao and Ji, Rongrong and Gao, Yue},
  booktitle={AAAI},
  volume={33},
  number={01},
  pages={3558--3565},
  year={2019}
}

@article{bai2021hypergraph,
  title={Hypergraph convolution and hypergraph attention},
  author={Bai, Song and Zhang, Feihu and Torr, Philip HS},
  journal={Pattern Recognition},
  volume={110},
  pages={107637},
  year={2021},
  publisher={Elsevier}
}

@inproceedings{huang2021unignn,
  title={UniGNN: a Unified Framework for Graph and Hypergraph Neural Networks},
  author={Huang, Jing and Yang, Jie},
  booktitle={IJCAI},
  pages={2563--2569},
  year={2021}
}

@inproceedings{wang2023equivariant,
title={Equivariant Hypergraph Diffusion Neural Operators},
author={Peihao Wang and Shenghao Yang and Yunyu Liu and Zhangyang Wang and Pan Li},
booktitle={ICLR},
year={2023}
}

@inproceedings{fixelle2025hypergraph,
  title={Hypergraph Vision Transformers: Images are More than Nodes, More than Edges},
  author={Fixelle, Joshua},
  booktitle={CVPR},
  pages={9751--9761},
  year={2025}
}

@article{han2025hypergraph,
  title={Hypergraph foundation model for brain disease diagnosis},
  author={Han, Xiangmin and Xue, Rundong and Feng, Jingxi and Feng, Yifan and Du, Shaoyi and Shi, Jun and Gao, Yue},
  journal={TNNLS},
  year={2025},
  publisher={IEEE}
}

@article{yadati2019hypergcn,
  title={Hypergcn: A new method for training graph convolutional networks on hypergraphs},
  author={Yadati, Naganand and Nimishakavi, Madhav and Yadav, Prateek and Nitin, Vikram and Louis, Anand and Talukdar, Partha},
  journal={NeurIPS},
  volume={32},
  year={2019}
}

@inproceedings{prokopchik2022nonlinear,
  title={Nonlinear feature diffusion on hypergraphs},
  author={Prokopchik, Konstantin and Benson, Austin R and Tudisco, Francesco},
  booktitle={ICML},
  pages={17945--17958},
  year={2022},
  organization={PMLR}
}

@inproceedings{wang2023hypergraph,
  title={From hypergraph energy functions to hypergraph neural networks},
  author={Wang, Yuxin and Gan, Quan and Qiu, Xipeng and Huang, Xuanjing and Wipf, David},
  booktitle={ICML},
  pages={35605--35623},
  year={2023},
  organization={PMLR}
}

@inproceedings{saxena2024dphgnn,
  title={Dphgnn: A dual perspective hypergraph neural networks},
  author={Saxena, Siddhant and Ghatak, Shounak and Kolla, Raghu and Mukherjee, Debashis and Chakraborty, Tanmoy},
  booktitle={SIGKDD},
  pages={2548--2559},
  year={2024}
}

@inproceedings{tangtraining2025,
  title={Training-Free Message Passing for Learning on Hypergraphs},
  author={Tang, Bohan and Liu, Zexi and Jiang, Keyue and Chen, Siheng and Dong, Xiaowen},
  booktitle={ICLR},
  year={2025}
}

@article{kim2023datasets,
  title={Datasets, tasks, and training methods for large-scale hypergraph learning},
  author={Kim, Sunwoo and Lee, Dongjin and Kim, Yul and Park, Jungho and Hwang, Taeho and Shin, Kijung},
  journal={Data mining and knowledge discovery},
  volume={37},
  number={6},
  pages={2216--2254},
  year={2023},
  publisher={Springer}
}

@inproceedings{han2023search,
  title={Search behavior prediction: A hypergraph perspective},
  author={Han, Yan and Huang, Edward W and Zheng, Wenqing and Rao, Nikhil and Wang, Zhangyang and Subbian, Karthik},
  booktitle={WSDM},
  pages={697--705},
  year={2023}
}

@inproceedings{do2020structural,
  title={Structural patterns and generative models of real-world hypergraphs},
  author={Do, Manh Tuan and Yoon, Se-eun and Hooi, Bryan and Shin, Kijung},
  booktitle={SIGKDD},
  pages={176--186},
  year={2020}
}

@article{fu2023continual,
  title={Continual image deraining with hypergraph convolutional networks},
  author={Fu, Xueyang and Xiao, Jie and Zhu, Yurui and Liu, Aiping and Wu, Feng and Zha, Zheng-Jun},
  journal={TPAMI},
  volume={45},
  number={8},
  pages={9534--9551},
  year={2023},
  publisher={IEEE}
}

@inproceedings{chien2022you,
  title={You are AllSet: A Multiset Function Framework for Hypergraph Neural Networks},
  author={Chien, Eli and Pan, Chao and Peng, Jianhao and Milenkovic, Olgica},
  booktitle={ICLR},
  year={2022}
}

@inproceedings{sinha2015overview,
  title={An overview of microsoft academic service (mas) and applications},
  author={Sinha, Arnab and Shen, Zhihong and Song, Yang and Ma, Hao and Eide, Darrin and Hsu, Bo-June and Wang, Kuansan},
  booktitle={WWW},
  pages={243--246},
  year={2015}
}

@inproceedings{li2025hypergraph,
  title={When hypergraph meets heterophily: New benchmark datasets and baseline},
  author={Li, Ming and Gu, Yongchun and Wang, Yi and Fang, Yujie and Bai, Lu and Zhuang, Xiaosheng and Lio, Pietro},
  booktitle={AAAI},
  volume={39},
  number={17},
  pages={18377--18384},
  year={2025}
}

@inproceedings{lin2024hypergraph,
  title={Hypergraph Neural Architecture Search},
  author={Lin, Wei and Peng, Xu and Yu, Zhengtao and Jin, Taisong},
  booktitle={AAAI},
  volume={38},
  number={12},
  pages={13837--13845},
  year={2024}
}

@article{antelmi2023survey,
  title={A survey on hypergraph representation learning},
  author={Antelmi, Alessia and Cordasco, Gennaro and Polato, Mirko and Scarano, Vittorio and Spagnuolo, Carmine and Yang, Dingqi},
  journal={CSUR},
  volume={56},
  number={1},
  pages={1--38},
  year={2023},
  publisher={ACM New York, NY}
}

@inproceedings{welling2009herding,
  title={Herding dynamical weights to learn},
  author={Welling, Max},
  booktitle={ICML},
  pages={1121--1128},
  year={2009}
}

@inproceedings{sener2018active,
  title={Active Learning for Convolutional Neural Networks: A Core-Set Approach},
  author={Sener, Ozan and Savarese, Silvio},
  booktitle={ICLR},
  year={2018}
}

@inproceedings{gong2025exploring,
  title={Exploring Hypergraph Condensation via Variational Hyperedge Generation and Multi-Aspectual Amelioration},
  author={Gong, Zheng and Shen, Shuheng and Meng, Changhua and Sun, Ying},
  booktitle={ACM TheWebConf},
  pages={1248--1260},
  year={2025}
}

@inproceedings{aghdaei2021hypersf,
  title={HyperSF: Spectral hypergraph coarsening via flow-based local clustering},
  author={Aghdaei, Ali and Zhao, Zhiqiang and Feng, Zhuo},
  booktitle={ICCAD},
  pages={1--9},
  year={2021},
  organization={IEEE}
}

@inproceedings{aghdaei2022hyperef,
  title={HyperEF: Spectral hypergraph coarsening by effective-resistance clustering},
  author={Aghdaei, Ali and Feng, Zhuo},
  booktitle={ICCAD},
  pages={1--9},
  year={2022}
}

@inproceedings{liu2024graph,
  title={Graph data condensation via self-expressive graph structure reconstruction},
  author={Liu, Zhanyu and Zeng, Chaolv and Zheng, Guanjie},
  booktitle={SIGKDD},
  pages={1992--2002},
  year={2024}
}

@inproceedings{jingraph,
  title={Graph Condensation for Graph Neural Networks},
  author={Jin, Wei and Zhao, Lingxiao and Zhang, Shichang and Liu, Yozen and Tang, Jiliang and Shah, Neil},
  booktitle={ICLR},
  year={2022}
}

@article{zheng2023structure,
  title={Structure-free graph condensation: From large-scale graphs to condensed graph-free data},
  author={Zheng, Xin and Zhang, Miao and Chen, Chunyang and Nguyen, Quoc Viet Hung and Zhu, Xingquan and Pan, Shirui},
  journal={NeurIPS},
  volume={36},
  pages={6026--6047},
  year={2023}
}

@inproceedings{gao2025rethinking,
  title={Rethinking and accelerating graph condensation: A training-free approach with class partition},
  author={Gao, Xinyi and Ye, Guanhua and Chen, Tong and Zhang, Wentao and Yu, Junliang and Yin, Hongzhi},
  booktitle={ACM TheWebConf},
  pages={4359--4373},
  year={2025}
}

@inproceedings{liugraph,
  title={Graph Distillation with Eigenbasis Matching},
  author={Liu, Yang and Bo, Deyu and Shi, Chuan},
  booktitle={ICML},
  year={2024}
}

@inproceedings{zhangnavigating,
  title={Navigating Complexity: Toward Lossless Graph Condensation via Expanding Window Matching},
  author={Zhang, Yuchen and Zhang, Tianle and Wang, Kai and Guo, Ziyao and Liang, Yuxuan and Bresson, Xavier and Jin, Wei and You, Yang},
  booktitle={ICML},
  year={2024}
}

@article{yang2024does,
  title={Does graph distillation see like vision dataset counterpart?},
  author={Yang, Beining and Wang, Kai and Sun, Qingyun and Ji, Cheng and Fu, Xingcheng and Tang, Hao and You, Yang and Li, Jianxin},
  journal={NeurIPS},
  volume={36},
  year={2023}
}

@inproceedings{liadapting2025,
  title={Adapting Precomputed Features for Efficient Graph Condensation},
  author={Li, Yuan and Hu, Jun and Liu, Zemin and Hooi, Bryan and Chen, Jia and He, Bingsheng},
  booktitle={ICML},
  pages={34604--34617},
  year={2025},
  organization={PMLR}
}

@inproceedings{xiao2025disentangled,
  title={Disentangled condensation for large-scale graphs},
  author={Xiao, Zhenbang and Wang, Yu and Liu, Shunyu and Hu, Bingde and Wang, Huiqiong and Song, Mingli and Zheng, Tongya},
  booktitle={ACM TheWebConf},
  pages={4494--4506},
  year={2025}
}

@article{chung2007heat,
  title={The heat kernel as the pagerank of a graph},
  author={Chung, Fan},
  journal={PNAS},
  volume={104},
  number={50},
  pages={19735--19740},
  year={2007},
  publisher={National Acad Sciences}
}

@inproceedings{fang2024exgc,
  title={Exgc: Bridging efficiency and explainability in graph condensation},
  author={Fang, Junfeng and Li, Xinglin and Sui, Yongduo and Gao, Yuan and Zhang, Guibin and Wang, Kun and Wang, Xiang and He, Xiangnan},
  booktitle={ACM TheWebConf},
  pages={721--732},
  year={2024}
}

@article{gao2025graph,
  title={Graph condensation: A survey},
  author={Gao, Xinyi and Yu, Junliang and Chen, Tong and Ye, Guanhua and Zhang, Wentao and Yin, Hongzhi},
  journal={IEEE TKDE},
  year={2025},
  publisher={IEEE}
}

@article{hamilton2017inductive,
  title={Inductive representation learning on large graphs},
  author={Hamilton, Will and Ying, Zhitao and Leskovec, Jure},
  journal={NeurIPS},
  volume={30},
  year={2017}
}

@inproceedings{kipf2017semi,
title={Semi-Supervised Classification with Graph Convolutional Networks},
author={Thomas N. Kipf and Max Welling},
booktitle={ICLR},
year={2017}
}

@inproceedings{xiao2024simple,
  title={Simple graph condensation},
  author={Xiao, Zhenbang and Wang, Yu and Liu, Shunyu and Wang, Huiqiong and Song, Mingli and Zheng, Tongya},
  booktitle={ECML-PKDD},
  pages={53--71},
  year={2024},
  organization={Springer}
}

@article{liu2022graph,
  title={Graph condensation via receptive field distribution matching},
  author={Liu, Mengyang and Li, Shanchuan and Chen, Xinshi and Song, Le},
  journal={arXiv preprint arXiv:2206.13697},
  year={2022}
}

@book{goldreich2017introduction,
  title={Introduction to property testing},
  author={Goldreich, Oded},
  year={2017},
  publisher={Cambridge University Press}
}

@inproceedings{lee2024villain,
  title={VilLain: Self-supervised learning on homogeneous hypergraphs without features via virtual label propagation},
  author={Lee, Geon and Lee, Soo Yong and Shin, Kijung},
  booktitle={ACM TheWebConf},
  pages={594--605},
  year={2024}
}

@article{xu2026c,
    title = {{C$^2$TC}: A Training-Free Framework for Efficient Tabular Data Condensation},
  author={Xu, Sijia and Li, Fan and Wang, Xiaoyang and Yang, Zhengyi and Lin, Xuemin},
  journal={arXiv preprint arXiv:2602.21717},
  year={2026}
}

@inproceedings{
li2026dhgbench,
title={{DHG}-Bench: A Comprehensive Benchmark for Deep Hypergraph Learning},
author={Fan Li and Xiaoyang Wang and Wenjie Zhang and Ying Zhang and Xuemin Lin},
booktitle={ICLR},
year={2026},
}

@article{li2026fairness,
  title={Fairness-aware Hypergraph Self-Supervised Learning with Sampling-efficient Signals},
  author={Li, Fan and Wang, Xiaoyang and Cheng, Dawei and Zhang, Ying and Zhang, Wenjie and Lin, Xuemin},
  journal={IEEE TKDE},
  year={2026},
  publisher={IEEE}
}

@inproceedings{yang2023hgmatch,
  title={Hgmatch: A match-by-hyperedge approach for subgraph matching on hypergraphs},
  author={Yang, Zhengyi and Zhang, Wenjie and Lin, Xuemin and Zhang, Ying and Li, Shunyang},
  booktitle={ICDE},
  pages={2063--2076},
  year={2023},
  organization={IEEE}
}

@inproceedings{luo2024hierarchical,
  title={Hierarchical structure construction on hypergraphs},
  author={Luo, Qi and Zhang, Wenjie and Yang, Zhengyi and Wen, Dong and Wang, Xiaoyang and Yu, Dongxiao and Lin, Xuemin},
  booktitle={CIKM},
  pages={1597--1606},
  year={2024}
}

@inproceedings{tan2026memotime,
  title={Memotime: Memory-augmented temporal knowledge graph enhanced large language model reasoning},
  author={Tan, Xingyu and Wang, Xiaoyang and Liu, Qing and Xu, Xiwei and Yuan, Xin and Zhu, Liming and Zhang, Wenjie},
  booktitle={ACM TheWebConf},
  pages={4220--4231},
  year={2026}
}

@inproceedings{tan2023higher,
  title={Higher-order peak decomposition},
  author={Tan, Xingyu and Qian, Jingya and Chen, Chen and Qing, Sima and Wu, Yanping and Wang, Xiaoyang and Zhang, Wenjie},
  booktitle={CIKM},
  pages={4310--4314},
  year={2023}
}

@article{tan2026privgemo,
  title={PrivGemo: Privacy-Preserving Dual-Tower Graph Retrieval for Empowering LLM Reasoning with Memory Augmentation},
  author={Tan, Xingyu and Wang, Xiaoyang and Liu, Qing and Xu, Xiwei and Yuan, Xin and Zhu, Liming and Zhang, Wenjie},
  journal={arXiv preprint arXiv:2601.08739},
  year={2026}
}

@inproceedings{zhai2025sgpt,
  title={SGPT: Few-Shot Prompt Tuning for Signed Graphs},
  author={Zhai, Zian and Sima, Qing and Wang, Xiaoyang and Zhang, Wenjie},
  booktitle={CIKM},
  pages={4045--4055},
  year={2025}
}

@article{zhai2025graph,
  title={Graph is a natural regularization: Revisiting vector quantization for graph representation learning},
  author={Zhai, Zian and Li, Fan and Tan, Xingyu and Wang, Xiaoyang and Zhang, Wenjie},
  journal={arXiv preprint arXiv:2508.06588},
  year={2025}
}
\bibliographystyle{icml2026}

\newpage
\appendix
\onecolumn

\section{Algorithm}
\label{app:algo}

We show the detailed algorithm of AHGCDD in Algorithm~\ref{alg:ahgcdd}.
Specifically, we first set the condensed label set $\mathbf{Y^{\prime}}$ to fixed values and initialize $\mathbf{X}^{\prime}$ with the HKPR-based initialization method (lines 3-5). 
In each condensation loop, we first synthesize the sparsified hypergraph structure $\mathbf{H}^{\prime}$
via the anchor-guided hyperedge generation strategy (line~7),
and then utilize HKPR-based diffusion to compute the structure-aware node features
$\tilde{\mathbf{X}}^{\prime}$ for the condensed data (line 8).
Then, we calculate the weighted sum of coarse-grained and fine-grained discrimination loss (lines 9-11).
The losses are backpropagated to alternately update the condensed features $\mathbf{X}^{\prime}$ and
the structure generator, where the latter includes $\mathbf{MLP_{\Phi}}$ and
the anchor-adaptive threshold $\boldsymbol{\delta}$ (lines 12-17). When finishing the updating of condensed hypergraph parameters, we apply anchor-guided hyperedge generation to construct the final structure (line 19) and return the condensed data (line 20). 

\begin{algorithm}[h]
\caption{AHGCDD for Hypergraph Data Condensation}
\begin{algorithmic}[1]
\label{alg:ahgcdd}
\STATE \textbf{Input:} Original hypergraph $\mathcal{T}=(\mathbf{X},\mathbf{H},\mathbf{Y})$
\STATE \textbf{Require:} Training epochs $T$, learning rate $\eta_{1}$,$\eta_{2}$, number of update iterations $\tau_{1}$, $\tau_{2}$.
\STATE Initialize labels $\mathbf{Y^{\prime}}$ with the same distribution as $\mathbf{Y}$
\STATE Compute HKPR-based features $\tilde{\mathbf{X}}$ for $\mathcal{T}$ via Eq.~(\ref{eq:trunc_possion})
\STATE Initialize condensed features $\mathbf{X^{\prime}}$ via Eq.~(\ref{eq:uniaggr})
\FOR{$t = 0$ to $T-1$}
\STATE Generate structure $\mathbf{H^{\prime}}$ via Eq.~(\ref{eq:anchor_gen}) - (\ref{eq:icd})
\STATE Compute HKPR-based features $\tilde{\mathbf{X}}^{\prime}$ for $\mathcal{S}$ via Eq.~(\ref{eq:hkpr_cond})
\STATE Compute coarse-grained loss $\mathcal{L}_{c}$ via Eq.~(\ref{eq:coarse_loss})
\STATE Compute fine-grained loss $\mathcal{L}_{f}$ via Eq.~(\ref{eq:fine_loss})
\STATE  Compute weighted discrimination loss $\mathcal{L}_{Disc}^{(t)}=\cos\!\left( \frac{\pi t}{2T} \right)\mathcal{L}_{c}
    + \sin\!\left( \frac{\pi t}{2T} \right)\mathcal{L}_{f}$
\IF{$t\%(\tau_{1}+\tau_{2})<\tau_{1}$}
\STATE Update $\mathbf{X^{\prime}} \leftarrow \mathbf{X^{\prime}}-\eta_{1}\nabla_{\mathbf{X^{\prime}}}\mathcal{L}_{Disc}^{(t)}$
\ELSE 
\STATE Update $\mathbf{\Phi} \leftarrow \mathbf{\Phi}-\eta_{2}\nabla_{\mathbf{\Phi}}\mathcal{L}_{Disc}^{(t)}$
\STATE Update $\boldsymbol{\delta} \leftarrow \boldsymbol{\delta} -\eta_{2}\nabla_{\boldsymbol{\delta}}\mathcal{L}_{Disc}^{(t)}$
\ENDIF
\ENDFOR
\STATE Generate structure $\mathbf{H^{\prime}}$ via Eq.~(\ref{eq:anchor_gen}) - (\ref{eq:icd})
\STATE \textbf{Output:} Condensed hypergraph $\mathcal{S}=(\mathbf{X}^{\prime},\mathbf{H}^{\prime},\mathbf{Y}^{\prime})$
\end{algorithmic}
\end{algorithm}

\section{Background Knowledge}
\label{app:back_ground}

\subsection{Hypergraph Neural Networks}
\label{app:hnn}

Hypergraphs generalize pairwise graphs by allowing each hyperedge to connect an arbitrary number of nodes, enabling the modeling of higher-order relations~\cite{tan2023higher,luo2024hierarchical}. HNNs have emerged as a powerful tool for representation learning on hypergraphs~\cite{yadati2019hypergcn,prokopchik2022nonlinear,wang2023hypergraph,saxena2024dphgnn,tangtraining2025,li2026fairness}. Typically, HNNs adopt a two-stage message-passing scheme in each layer, where hyperedge representations are first updated by aggregating information from their incident nodes, followed by updating node representations through aggregation from the connected hyperedges:
\begin{equation} 
\begin{split} 
z_{e,j}^{(l)} & = f_{\mathcal{V} \rightarrow \mathcal{E}}^{(l)}\big(z_{e,j}^{(l-1)},\{z_{v,k}^{(l-1)}|v_{k} \in e_{j}\}\big) \\
z_{v,i}^{(l)} & = f_{\mathcal{E} \rightarrow \mathcal{V}}^{(l)}\big(z_{v,i}^{(l-1)},\{z_{e,k}^{(l)}|v_{i} \in e_{k}\}\big). \\
\end{split} 
\end{equation}
Here, $z_{e,j}^{(l)}$ and $z_{v,i}^{(l)}$ are the embeddings of $e_{j}$ and $v_{i}$ at layer $l$. The functions $f_{\mathcal{V} \rightarrow \mathcal{E}}^{(l)}$ and $f_{\mathcal{E} \rightarrow \mathcal{V}}^{(l)}$ aggregate information from nodes and hyperedges, respectively. Due to their strong ability to capture higher-order information, HNNs have demonstrated state-of-the-art performance across a wide range of industrial and scientific applications, such as product recommendation~\cite{khan2025heterogeneous}, 3D object detection~\cite{fixelle2025hypergraph}, and disease diagnosis~\cite{han2025hypergraph}.

\subsection{Related Works on Graph Condensation}
\label{app:related_work}

Graphs model pairwise relations and serve as a fundamental structure for relational learning.~\cite{zhai2025sgpt,zhai2025graph,tan2026memotime,tan2026privgemo}. Graph condensation (GC) aims to synthesize a compact graph that preserves the training utility of the original graph. Existing GC methods can be broadly categorized into structure-based and structure-free approaches. Structure-based methods explicitly model graph structure during condensation. GCond~\cite{jingraph} parameterizes the graph structure as a function of learnable node features and optimizes it via gradient matching. SGDD~\cite{yang2024does} enhances GCond by incorporating original structural information, thereby improving generalization. GEDM~\cite{liugraph} mitigates spectral bias through an eigenbasis matching objective, enabling better cross-architecture generalization. EXGC~\cite{fang2024exgc} improves efficiency by adopting a mean-field variational approximation and pruning redundant parameters, while DisCo~\cite{xiao2025disentangled} reformulates GC into a two-stage, GNN-free pipeline that independently condenses nodes and generates edges. CGC~\cite{gao2025rethinking} further proposes a training-free framework that transforms class-level distribution matching into a class-partition problem, enabling efficient EM-based clustering.
Structure-free methods bypass explicit structure generation and directly synthesize compact node features. SFGC~\cite{zheng2023structure} and GEOM~\cite{zhangnavigating} perform condensation by matching the training trajectories of the original graph, whereas GCPA~\cite{liadapting2025} introduces a precomputation stage with diversity-aware adaptation to simplify training and achieve SOTA performance.

\section{More Experimental Details}

\subsection{Evaluation Protocol} 
\label{app:eval_protocal}

To fairly evaluate the quality of condensed data, we follow a two-step protocol~\cite{xu2026c} for all methods: 
(1) \textbf{Condensation step}, where different distillation methods are applied to the original training hypergraphs to generate synthetic data; and 
(2) \textbf{Evaluation step}, where HNNs are trained from scratch on the synthetic hypergraphs and evaluated on the real test sets. The representative HGNN~\cite{feng2019hypergraph} is used as the evaluation model. We condense each dataset to three different condensation ratios $(r)$, defined as the ratio of the number of synthetic nodes $rN$ ($0 < r < 1$) to the number of original nodes $N$.
Specifically, we set $r$ to $\{0.5\%, 1\%, 2.5\%\}$ for small-scale datasets (Cora and Pubmed), 
$\{0.1\%, 0.5\%, 1\%\}$ for medium-scale datasets (DBLP-CA and Walmart), 
and $\{0.05\%, 0.25\%, 0.5\%\}$ for large-scale datasets (Yelp and MAG-PM). 
For each setting, we generate five synthetic hypergraphs, evaluate each of them five times, and report the average node classification accuracy along with the standard deviation.
All experiments are conducted on a Linux server equipped with an Intel Xeon Silver 4208 CPU, a Quadro RTX 6000 GPU with 24GB of memory, and 128GB of RAM. 

\subsection{Hyperparameter Settings}
\label{app:hyperparam}

The structure function $\mathbf{MLP_{\Phi}}$ is a 3-layer MLP with 256 hidden units. 
The path weight $\lambda$ is selected from $\{1,2,3,4,5\}$. 
The number of negative samples $N_{\mathrm{neg}}$ is searched over $\{1,5,10,20,50\}$. 
The subset size $s$ is chosen from $\{5,10,20\}$, and the number of synthetic training epochs $T$ is tuned from $\{50,100,150,200\}$. 
The learning rates for updating the condensed features and structure, denoted by $\eta_{1}$ and $\eta_{2}$, are selected from $\{0.01, 0.001, 0.0001\}$. 
The update intervals for feature and structure optimization are set to $\tau_{1}=5$ and $\tau_{2}=15$, respectively. We use the DHG-Bench~\cite{li2026dhgbench} library to implement different HNN models for evaluation.
All learnable parameters are optimized using the Adam optimizer, and all hyperparameters are tuned based on the validation set.

\section{More Experimental Results}
\label{app:exp_results}

\subsection{Cross-architecture Transferability}

\begin{table}[t]
\caption{Cross-architecture transferability of condensed data. Avg. and Std. represent the mean test accuracy and the corresponding standard deviation computed across different evaluated HNNs.}
\label{tab:general}
\centering  
\resizebox{0.7\linewidth}{!}{\begin{tabular}{ccccccccc}
\toprule
Dataset  & Method  & HGNN & HCHA & UniGCNII & AllSet & ED-HNN & Avg. & Std.\\ 
\midrule

\multirow{4}{*}{\shortstack{Cora \\ ($r$=1\%)}} 
& Herding   & 49.45 & 45.61 & 46.20 & 45.46 & 47.53 & 46.85 & 1.49 \\
& HGCPA     & 63.46 & 61.40 & 70.11 & 68.04 & 61.84 & 64.79 & 3.48 \\ 
& HG-Cond   & 73.36 & 74.89 & 65.73 & 71.05 & 73.71 & 71.75 & 3.26 \\
& AHGCDD    & 76.48 & 77.96 & 75.59 & 76.33 & 77.07 & \textbf{76.69} & \textbf{0.79} \\ 
\midrule

\multirow{4}{*}{\shortstack{Pubmed \\ ($r$=1\%)}} 
& Herding   & 78.01 & 78.43 & 78.68 & 78.74 & 78.33 & 78.44 & \textbf{0.26} \\
& HGCPA     & 68.85 & 68.48 & 72.24 & 68.97 & 61.51 & 68.01 & 3.52 \\
& HG-Cond   & 77.65 & 78.66 & 77.98 & 75.03 & 71.75 & 76.21 & 2.55 \\
& AHGCDD    & 79.57 & 79.40 & 82.67 & 78.57 & 79.08 & \textbf{79.86} & 1.45 \\ 
\midrule

\multirow{4}{*}{\shortstack{DBLP-CA \\ ($r$=0.5\%)}}  
& Herding   & 80.17 & 79.44 & 78.21 & 74.32 & 79.12 & 78.25 & 2.06 \\
& HGCPA     & 84.72 & 83.32 & 86.37 & 77.32 & 80.92 & 82.53 & 3.16 \\
& HG-Cond   & 86.66 & 86.50 & 85.45 & 80.62 & 85.56 & 84.96 & 2.22 \\
& AHGCDD    & 88.80 & 88.51 & 88.60 & 86.77 & 88.27 & \textbf{88.18} & \textbf{0.73} \\ 
\midrule

\multirow{4}{*}{\shortstack{Walmart \\ ($r$=0.5\%)}}  
& Herding   & 55.96 & 57.05 & 50.88 & 56.75 & 58.86 & 55.90 & 2.68 \\
& HGCPA     & 48.61 & 48.05 & 42.84 & 54.49 & 63.53 & 51.50 & 7.06 \\
& HG-Cond   & 54.13 & 55.02 & 48.67 & 51.58 & 49.19 & 51.72 & 2.55 \\
& AHGCDD    & 67.92 & 68.28 & 63.76 & 67.97 & 67.82 & \textbf{67.15} & \textbf{1.70} \\ 
\midrule

\multirow{4}{*}{\shortstack{Yelp \\ ($r$=0.25\%)}}  
& Herding    & 23.01 & 22.47 & 25.03 & 23.40 & 24.49 & 23.68 & 0.95 \\
& HGCPA     & 23.75 & 23.87 & 24.20 & 20.18 & 25.92 & 23.58 & 1.87 \\
& HG-Cond   & OOM & OOM & OOM & OOM & OOM & OOM & OOM \\
& AHGCDD    & 26.80 & 26.55 & 26.71 & 26.59 & 27.09 & \textbf{26.75} & \textbf{0.19} \\ 
\midrule

\multirow{4}{*}{\shortstack{MAG-PM \\ ($r$=0.25\%)}} 
& Herding   & 43.79 & 41.95 & 41.02 & 40.05 & 37.66 & 40.89 & \textbf{2.03} \\
& HGCPA     & 44.82 & 45.79 & 35.13 & 35.34 & 27.58 & 37.73 & 6.79 \\
& HG-Cond   & 53.07 & 51.49 & 37.57 & 45.44 & 39.89 & 45.49 & 6.12 \\
& AHGCDD    & 57.55 & 55.87 & 46.55 & 48.67 & 45.36 & \textbf{50.80} & 4.97 \\ 
\bottomrule
\end{tabular}}
\end{table}

We evaluate the cross-architecture transferability of different methods by testing the synthetic hypergraphs on five widely used HNN models, including HGNN~\cite{feng2019hypergraph}, HCHA~\cite{bai2021hypergraph}, UniGCNII~\cite{huang2021unignn}, AllSet~\cite{chien2022you}, and ED-HNN~\cite{wang2023equivariant}. As shown in Table~\ref{tab:general}, AHGCDD achieves the highest average accuracy across all datasets, indicating strong transferability and robustness. Moreover, it significantly reduces the performance gap among different HNNs, as evidenced by the lowest performance variance on all datasets except MAG-PM. This strong generalization can be attributed to two main factors: (1) our framework optimizes condensed data without relying on specific HNN architectures. (2) the condensed features and structures are jointly optimized to align with downstream task semantics, thereby avoiding spurious high-order interactions that could otherwise be amplified by different hypergraph message-passing mechanisms.
In addition, we observe that HGCPA exhibits the highest accuracy variance across different architectures, indicating that structure-free GC methods do not generalize well to HNN architectures.

\begin{figure}[h!]
	\centering
	\subfloat[Impact of $\lambda$]{
		\includegraphics[width=0.30\linewidth]{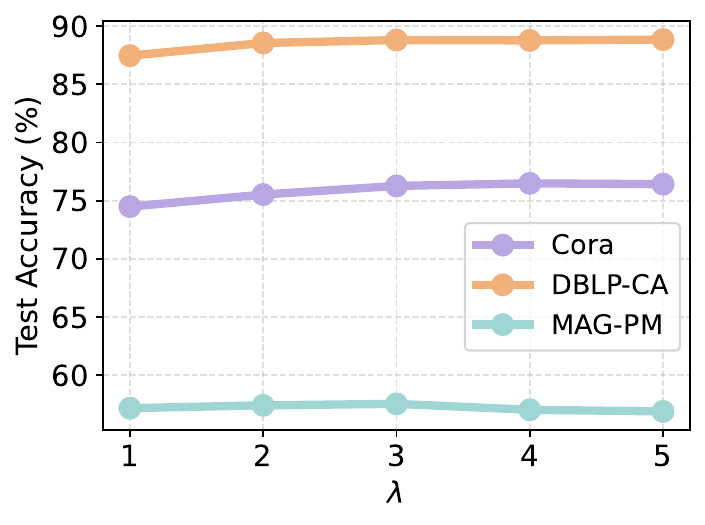}}
	\subfloat[Impact of $N_{\mathrm{neg}}$]{
		\includegraphics[width=0.30\linewidth]{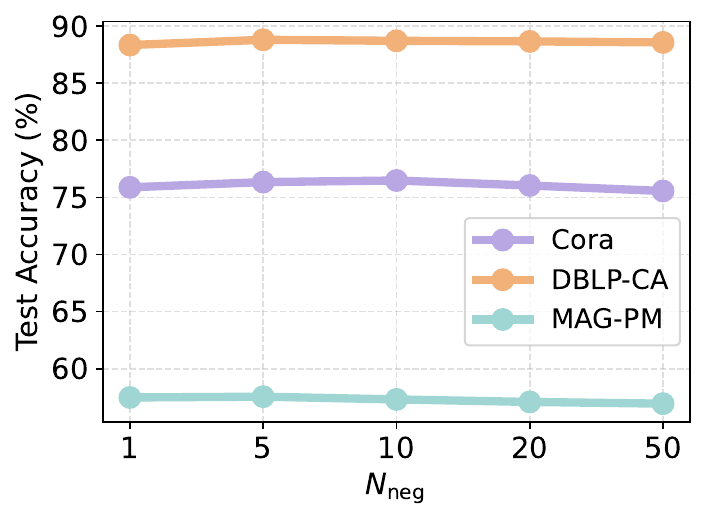}}
        \caption{Sensitivity analysis of hyperparameters.}
        \label{fig:param_analysis}
\end{figure}

\subsection{Parameter Sensitivity Analysis}

We conduct a parameter analysis to further investigate the influence of $\lambda$ and $N_{\mathrm{neg}}$, as illustrated in Figure~\ref{fig:param_analysis}. Specifically, increasing $\lambda$ initially leads to consistent performance improvements. However, the marginal gains gradually diminish as $\lambda$ increases, and a slight performance degradation can be observed when $\lambda$ exceeds a certain threshold on some datasets. This suggests that while broader information diffusion is beneficial, excessively large diffusion ranges may introduce excessive smoothing, revealing an inherent trade-off in the choice of $\lambda$. On the other hand, performance remains stable when varying the $N_{\mathrm{neg}}$, demonstrating the robustness of our framework. Notably, optimal performance can already be achieved with small values (e.g., 5 or 10), which further validates the efficiency advantage of our discrimination loss.

\subsection{Evaluation on Heterophilic Hypergraphs}

We further evaluate on three heterophilic datasets (Actor, Amazon-ratings, and Pokec~\cite{li2025hypergraph}) with a condensation ratio of $r=1\%$. Table~\ref{tab:hetero_results} shows that our method consistently achieves state-of-the-art performance and preserves over 90\% of the full-data performance across all datasets, demonstrating strong robustness and generalizability in heterophilic settings.

\begin{table}[h!]
\centering
\caption{Performance comparison on heterophilic datasets.}
\label{tab:hetero_results}
\small
\begin{tabular}{cccccc}
\toprule
Dataset & Herding & HGCPA & HG-Cond & AHGCDD & Whole \\
\midrule
Actor & 70.28$\pm$1.06 & 66.40$\pm$3.01 & 71.64$\pm$0.57 & \textbf{72.29$\pm$1.21} & 77.83$\pm$0.37 \\
Amazon-ratings & 24.16$\pm$0.31 & 23.12$\pm$1.86 & 24.53$\pm$0.77 & \textbf{25.41$\pm$0.39} & 28.05$\pm$0.28 \\
Pokec & 53.03$\pm$0.98 & 52.37$\pm$0.76 & 53.55$\pm$0.21 & \textbf{55.04$\pm$0.47} & 57.87$\pm$0.76 \\
\bottomrule
\end{tabular}
\end{table}

\subsection{Additional  Ablation Study on Dynamic Weighting}

We further investigate the effectiveness of different dynamic weighting strategies in the proposed dual-level discrimination objective. Besides the cosine schedule adopted in the main paper, we additionally evaluate two monotonic coarse-to-fine schedules, namely \textit{Linear} and \textit{Step}.
For the Linear schedule, the coarse-level weight gradually decreases during training:
\begin{equation}
w_c(t)=1-\frac{t}{T}, \qquad
w_f(t)=\frac{t}{T},
\end{equation}
where $T$ denotes the total number of training iterations.
For the Step schedule, the optimization abruptly switches from coarse-level to fine-level objectives at the midpoint:
\begin{equation}
w_c(t)=
\begin{cases}
1, & t < \frac{T}{2},\\
0, & t \ge \frac{T}{2},
\end{cases}
\qquad
w_f(t)=1-w_c(t).
\end{equation}

As shown in Table~\ref{tab:reweighting_schedule}, monotonic reweighting generally outperforms static weighting, supporting the effectiveness of our coarse-to-fine design. Linear and Cosine perform best, while Step is less stable, likely due to the abrupt transition at $t=\frac{T}{2}$, indicating the importance of smooth scheduling. Cosine further improves over Linear, possibly because its slower early-stage decay of $w_c(t)$ better preserves global separability.

\begin{table}[h]
\centering
\caption{Performance comparison of different reweighting schedules.}
\label{tab:reweighting_schedule}
\small
\begin{tabular}{ccccc}
\toprule
Dataset & Static & Linear & Step & Cosine \\
\midrule
Cora ($r$=1\%)     & 75.63$\pm$1.42 & 75.92$\pm$1.18 & 75.33$\pm$1.34 & \textbf{76.48$\pm$1.58} \\
DBLP-CA ($r$=0.5\%) & 87.79$\pm$0.55 & 88.36$\pm$0.52 & 87.97$\pm$0.47 & \textbf{88.80$\pm$0.65} \\
MAG-PM ($r$=0.25\%)  & 56.84$\pm$0.81 & 57.22$\pm$0.80 & 56.43$\pm$2.48 & \textbf{57.55$\pm$0.54} \\
\bottomrule
\end{tabular}
\end{table}

\subsection{Additional Downstream Tasks}

\begin{table}[h!]
\centering
\caption{Node clustering results measured by NMI (\%).}
\label{tab:node_clustering}
\small
\begin{tabular}{ccccc}
\toprule
Dataset & Herding & HG-Cond & AHGCDD & Whole \\
\midrule
Cora ($r$=1\%)      & 23.27$\pm$3.25 & 43.75$\pm$1.86 & \textbf{47.58$\pm$1.84} & 50.46$\pm$2.55 \\
DBLP-CA ($r$=0.5\%) & 63.23$\pm$0.62 & 70.46$\pm$1.70 & \textbf{71.99$\pm$0.67} & 75.10$\pm$0.50 \\
MAG-PM ($r$=0.25\%) & 18.53$\pm$0.59 & 24.56$\pm$0.16 & \textbf{26.95$\pm$0.32} & 31.48$\pm$0.75 \\
\bottomrule
\end{tabular}
\end{table}

\begin{table}[h!]
\centering
\caption{Node retrieval results measured by MAP (\%).}
\label{tab:node_retrieval}
\small
\begin{tabular}{ccccc}
\toprule
Dataset & Herding & HG-Cond & AHGCDD & Whole \\
\midrule
Cora ($r$=1\%)      & 38.18$\pm$1.36 & 67.13$\pm$0.70 & \textbf{70.95$\pm$0.64} & 79.06$\pm$0.65 \\
DBLP-CA ($r$=0.5\%) & 73.64$\pm$0.52 & 79.66$\pm$0.84 & \textbf{81.65$\pm$0.75} & 83.79$\pm$0.21 \\
MAG-PM ($r$=0.25\%) & 24.48$\pm$0.68 & 28.44$\pm$0.68 & \textbf{31.42$\pm$0.95} & 33.88$\pm$0.50 \\
\bottomrule
\end{tabular}
\end{table}

We conduct additional experiments on node clustering and node retrieval~\cite{lee2024villain}, measured by NMI (Normalized Mutual Information) and MAP (Mean Average Precision), respectively.
The results in Tables~\ref{tab:node_clustering} and~\ref{tab:node_retrieval} show that AHGCDD consistently achieves superior performance across different downstream tasks and preserves over 85\% of the full-data performance, demonstrating strong generalization capability.

\section{Theoretical Analysis}

\subsection{Proof of Theorem~\ref{thm:hkp_filter}}
\label{app:c_1}
\begin{theorem}
Let $\mathcal{L} = \mathbf{I} - \mathbf{P} \in \mathbb{R}^{n\times n}$ denote the normalized hypergraph Laplacian, with eigenvalues
$0=\mu_1 \le \mu_2 \le \cdots \le \mu_n \le 2$.
The HKPR-based diffusion
$\tilde{\mathbf{X}} = \mathbf{P}_{\lambda}\mathbf{X}$
can be expressed as a spectral filtering operation in the hypergraph Fourier domain:
\begin{equation}
\tilde{\mathbf{X}} = \mathbf{U}\, g(\mathbf{M})\, \mathbf{U}^{\top}\mathbf{X},
\end{equation}
where $\mathbf{U}=[\mathbf{u}_1,\ldots,\mathbf{u}_n]$ is the orthonormal eigenbasis of $\mathcal{L}$,
$\mathbf{M}=\mathrm{diag}(\mu_1,\ldots,\mu_n)$ is the Laplacian spectrum,
and $g(\mu)=e^{-\lambda \mu}$ is an exponentially decaying low-pass filter.
\end{theorem}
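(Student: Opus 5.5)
The plan is to diagonalize $\mathbf{P}$ and sum the Poisson series eigenvalue by eigenvalue. I read $\mathbf{P}^{(k)}$ as the $k$-th matrix power $\mathbf{P}^k$, with $\mathbf{P}^0=\mathbf{I}$.

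\emph{Step 1: symmetry and spectral decomposition.} Since $\mathbf{D}_v$ and $\mathbf{D}_e$ are diagonal, $\mathbf{P} = \mathbf{D}_{v}^{-1/2}\mathbf{H}\mathbf{D}_{e}^{-1}\mathbf{H}^{\top}\mathbf{D}_{v}^{-1/2}$ satisfies $\mathbf{P}^\top=\mathbf{P}$. Hence $\mathcal{L}=\mathbf{I}-\mathbf{P}$ is real symmetric. By the spectral theorem it has an orthonormal eigenbasis $\mathbf{U}$ with $\mathcal{L}=\mathbf{U}\mathbf{M}\mathbf{U}^\top$, and therefore
\[
\mathbf{P}=\mathbf{U}(\mathbf{I}-\mathbf{M})\mathbf{U}^\top .
\]
The ordering and range of the $\mu_i$ are given in the hypothesis. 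If one wants to justify them: $\mathbf{P}=\mathbf{B}\mathbf{B}^\top$ with $\mathbf{B}=\mathbf{D}_v^{-1/2}\mathbf{H}\mathbf{D}_e^{-1/2}$, so $\mathbf{P}$ is positive semidefinite. Also $\mathbf{P}$ is similar to the row-stochastic $\mathbf{D}_v^{-1}\mathbf{H}\mathbf{D}_e^{-1}\mathbf{H}^\top$, so its spectral radius is at most $1$. Together these give $\mu_i\in[0,1]\subset[0,2]$, and $\mathbf{D}_v^{1/2}\mathbf{1}$ is an eigenvector with $\mu_1=0$. None of this is actually needed for the identity itself.

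\emph{Step 2: powers and series.} Orthogonality of $\mathbf{U}$ gives $\mathbf{P}^k=\mathbf{U}(\mathbf{I}-\mathbf{M})^k\mathbf{U}^\top$. Substituting into the definition of $\mathbf{P}_\lambda$,
\[
\mathbf{P}_\lambda=\mathbf{U}\Big(\sum_{k\ge0}\tfrac{e^{-\lambda}\lambda^k}{k!}(\mathbf{I}-\mathbf{M})^k\Big)\mathbf{U}^\top .
\]
The middle factor is diagonal, and its $i$-th entry is $e^{-\lambda}\sum_k \lambda^k(1-\mu_i)^k/k! = e^{-\lambda}e^{\lambda(1-\mu_i)}=e^{-\lambda\mu_i}$. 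Equivalently, $\mathbf{P}_\lambda=e^{-\lambda}e^{\lambda\mathbf{P}}=e^{-\lambda\mathcal{L}}$, using that $\mathbf{I}$ commutes with $\mathcal{L}$. Multiplying on the right by $\mathbf{X}$ gives $\tilde{\mathbf{X}}=\mathbf{U}g(\mathbf{M})\mathbf{U}^\top\mathbf{X}$ with $g(\mu)=e^{-\lambda\mu}$.

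\emph{Step 3: low-pass interpretation and the one delicate point.} The filter $g$ is positive and strictly decreasing in $\mu$, with $g(0)=1$, so higher-frequency components are attenuated exponentially. The only step needing care is moving the infinite sum inside the conjugation by $\mathbf{U}$. This is justified because the exponential series converges absolutely in operator norm, with terms bounded by $e^{-\lambda}\lambda^k\|\mathbf{P}\|^k/k!$, and conjugation by a fixed orthogonal matrix is continuous.

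A secondary caveat concerns isolated nodes, where $\mathbf{D}_v^{-1/2}$ is undefined. I would either assume every node has positive degree or adopt the convention $0^{-1/2}=0$. Under that convention the corresponding rows of $\mathbf{P}$ vanish and symmetry is preserved, so the argument goes through unchanged.
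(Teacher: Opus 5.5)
Your proposal is correct and follows essentially the same route as the paper: both reduce $\mathbf{P}_\lambda$ to $e^{-\lambda\mathcal{L}}$ via the orthonormal eigendecomposition. You diagonalize $\mathbf{P}$ and sum the Poisson series eigenvalue by eigenvalue, whereas the paper first writes $e^{-\lambda}e^{\lambda\mathbf{P}}=e^{-\lambda\mathcal{L}}$ and then expands in the eigenbasis of $\mathcal{L}$; your additional remarks (symmetry of $\mathbf{P}$ giving the orthonormal basis, convergence of the series, and the sharper range $\mu_i\in[0,1]$) supply details the paper leaves implicit.
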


\begin{proof}
The HKPR diffusion matrix $\mathbf{P}_{\lambda}$ can be rewritten in closed form as
\begin{equation}
\label{eq:hkpr-def}
\mathbf{P}_\lambda
= \sum_{k=0}^{\infty} \frac{e^{-\lambda}\lambda^k}{k!} \mathbf{P}^k
= e^{-\lambda} e^{\lambda \mathbf{P}}
= e^{-\lambda (\mathbf{I} - \mathbf{P})}
= e^{-\lambda \mathcal{L}}.
\end{equation}

Let the eigendecomposition of the Laplacian be
\begin{equation}
\label{eq:lap-eig}
\mathcal{L}
= \mathbf{U} \, \mathrm{diag}(\mu_1, \ldots, \mu_n) \, \mathbf{U}^{\top},
\quad \mathbf{U}\mathbf{U}^{\top} = \mathbf{I}.
\end{equation}

Then we have
\begin{equation}
\label{eq:exp-lap}
\begin{split}
e^{-\lambda \mathcal{L}}
&= \sum_{k=0}^{\infty} \frac{(-\lambda)^k}{k!}\, \mathcal{L}^{k}
= \sum_{k=0}^{\infty} \frac{(-\lambda)^k}{k!}\, \mathbf{U} \boldsymbol{\Lambda}^{k} \mathbf{U}^{\top} \\
&= \mathbf{U} \left( \sum_{k=0}^{\infty} \frac{(-\lambda)^k}{k!}\, \boldsymbol{\Lambda}^{k} \right) \mathbf{U}^{\top}
= \mathbf{U} \, \mathrm{diag}(e^{-\lambda \mu_1}, \ldots, e^{-\lambda \mu_n}) \, \mathbf{U}^{\top}.
\end{split}
\end{equation}

Substituting~\eqref{eq:exp-lap} into $\tilde{\mathbf{X}} = \mathbf{P}_\lambda \mathbf{X}$, we obtain
\begin{equation}
\label{eq:filtered-x}
\tilde{\mathbf{X}}
= \mathbf{U} \, \mathrm{diag}(e^{-\lambda \mu_1}, \ldots, e^{-\lambda \mu_n}) \, \mathbf{U}^{\top} \mathbf{X}.
\end{equation}

Therefore, the smoothing operation admits a spectral filtering form, where the filter function is
\begin{equation}
\label{eq:filter}
g_\lambda(\mu) = e^{-\lambda \mu}.
\end{equation}

This completes the proof.
\end{proof}

\subsection{Proof of Lemma~\ref{lemma_possion}}
\label{app:c_possion_bound}

\begin{lemma}
Let $N \sim \mathrm{Poisson}(\lambda)$ with parameter $\lambda>0$. 
Then for any $t>0$, we have
\begin{equation}
\mathbb{P}\!\left[N \ge \lambda + t\sqrt{\lambda}\right]
\;\le\;
\exp\!\left(
-\frac{t^2}{2 + t/\sqrt{\lambda}}
\right).
\end{equation}
\end{lemma}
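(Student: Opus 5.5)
The plan is the standard Chernoff (exponential Markov) argument, followed by an elementary inequality that turns the exact Poisson rate function into the Bernstein-type form stated in Lemma~\ref{lemma_possion}.

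\emph{Moment generating function.} For $N\sim\mathrm{Poisson}(\lambda)$ and any $\theta>0$, summing the series gives $\mathbb{E}[e^{\theta N}]=\exp(\lambda(e^\theta-1))$. Write $x=\lambda+t\sqrt{\lambda}$, so that $x/\lambda = 1+u$ with $u = t/\sqrt{\lambda}>0$. Markov's inequality applied to $e^{\theta N}$ yields
\begin{equation*}
\mathbb{P}[N\ge x]\le \exp\!\big(\lambda(e^\theta-1)-\theta x\big).
\end{equation*}

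\emph{Optimizing over $\theta$.} The exponent is minimized at $\theta=\log(x/\lambda)=\log(1+u)>0$. Substituting this choice gives the classical bound
\begin{equation*}
\mathbb{P}[N\ge x]\le \exp\!\big(-\lambda h(u)\big),\qquad h(u)=(1+u)\log(1+u)-u .
\end{equation*}

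\emph{Elementary inequality (the main step).} It suffices to prove
\begin{equation*}
h(u)\ge \frac{u^2}{2+u}\qquad\text{for all } u\ge 0 .
\end{equation*}
Granting this, we get $\lambda h(u)\ge \lambda u^2/(2+u)=t^2/(2+t/\sqrt{\lambda})$, which is exactly the claimed bound.

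To prove the inequality, set $f(u)=h(u)-u^2/(2+u)$. Then $f(0)=0$, and
\begin{equation*}
h'(u)=\log(1+u),\qquad \frac{d}{du}\frac{u^2}{2+u}=\frac{u^2+4u}{(2+u)^2}.
\end{equation*}
So it is enough to show $\log(1+u)\ge (u^2+4u)/(2+u)^2$ for $u\ge 0$.

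\begin{itemize}
\item \emph{Step 1.} Use the known bound $\log(1+u)\ge 2u/(2+u)$. It holds because both sides vanish at $u=0$ and the derivatives compare as $1/(1+u)\ge 4/(2+u)^2$, which is equivalent to $(2+u)^2\ge 4(1+u)$, i.e.\ $u^2\ge 0$.
\item \emph{Step 2.} Check $2u/(2+u)\ge (u^2+4u)/(2+u)^2$. Multiplying through, this is $2u(2+u)\ge u^2+4u$, i.e.\ $u^2\ge 0$, which is true.
\end{itemize}

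Chaining the two steps gives $f'\ge 0$, and with $f(0)=0$ this yields $f\ge 0$, completing the proof.

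The only step that needs any care is this calculus comparison. The fallback, if the chained bound had been too loose, would be to compare Taylor series termwise: $h(u)=\sum_{k\ge2}(-1)^k u^k/(k(k-1))$ versus $u^2/(2+u)$. The derivative argument above avoids that.
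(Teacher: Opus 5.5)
Your proof is correct and self-contained. The MGF/Markov step, the optimizer $\theta=\log(1+u)$, the resulting Bennett-form bound $\exp(-\lambda h(u))$, and both calculus comparisons (each reducing to $u^2\ge 0$) all check out. Moreover, $\lambda u^2/(2+u)$ with $u=t/\sqrt{\lambda}$ is exactly $t^2/(2+t/\sqrt{\lambda})$.

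The paper instead takes a one-line route: it quotes a ready-made Poisson Chernoff bound from Goldreich, $\Pr[N\ge\lambda+x]\le\exp(-x^2/(2(\lambda+x)))$, and substitutes $x=t\sqrt{\lambda}$. That buys brevity, but the substitution does not actually produce the stated constant: $\frac{t^2\lambda}{2(\lambda+t\sqrt{\lambda})}=\frac{t^2}{2+2t/\sqrt{\lambda}}$. This is weaker than the claimed exponent $\frac{t^2}{2+t/\sqrt{\lambda}}$; in your notation, the cited bound corresponds to $\lambda u^2/(2+2u)$ rather than $\lambda u^2/(2+u)$.

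So your derivation, through the sharper inequality $h(u)\ge u^2/(2+u)$, is what genuinely establishes the lemma as stated. The paper's citation only justifies the weaker exponent. That exponent still decays exponentially in $t$, so its use in choosing $K=\lceil\lambda+3\sqrt{\lambda}\rceil$ is qualitatively unaffected. If you wanted a citable shortcut that does imply the statement, the standard Bernstein-type consequence of Bennett's inequality, $h(u)\ge u^2/(2+2u/3)$, dominates $u^2/(2+u)$ for all $u\ge 0$.
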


\begin{proof}
From~\cite{goldreich2017introduction}, for any $x>0$, a Poisson random variable
$N \sim \mathrm{Poisson}(\lambda)$ satisfies the following Chernoff bounds:
\begin{equation}
\Pr[N \ge \lambda + x]
\le
\exp\!\left(-\frac{x^2}{2(\lambda + x)}\right).
\end{equation}
By substituting $x = t\sqrt{\lambda}$ into the above inequality, we obtain
\begin{equation}
\Pr\!\left[N \ge \lambda + t\sqrt{\lambda}\right]
\le
\exp\!\left(-\frac{t^2}{2 + t/\sqrt{\lambda}}\right),
\end{equation}
which completes the proof.
\end{proof}

\subsection{Proof of Theorem~\ref{thm:mmd}}
\label{app:c_2}

\begin{theorem}
Let $P$ and $Q$ denote the joint distributions of $(\bar{\mathbf{x}}, y)$ induced by the original
and condensed data, respectively, where $\bar{\mathbf{x}} = \tilde{\mathbf{x}} / \|\tilde{\mathbf{x}}\|$ denotes
the $\ell_2$-normalized features.
Define the feature map
$\phi(\bar{\mathbf{x}}, y) = \mathbf{e}_y \otimes \bar{\mathbf{x}} \in \mathbb{R}^{C d}$,
where $\mathbf{e}_y$ is the one-hot vector of label $y$ and $\otimes$ denotes the Kronecker
product.
Consider the linear kernel
\begin{equation}
k\big((\bar{\mathbf{x}}, y), (\bar{\mathbf{x}}', y')\big)
= \langle \phi(\bar{\mathbf{x}}, y), \phi(\bar{\mathbf{x}}', y') \rangle,
\end{equation}
minimizing the intra-class alignment term in $\mathcal{L}_c$ leads to the minimization of the squared MMD between $P$ and $Q$:
\begin{equation}
\mathrm{MMD}_k^2(P, Q)
= \left\| \mathbb{E}_P[\phi] - \mathbb{E}_Q[\phi] \right\|_2^2.
\end{equation}
\end{theorem}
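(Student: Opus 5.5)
The plan is to compute the kernel mean embeddings explicitly and show that the squared MMD decomposes over classes into prototype discrepancies. Each such discrepancy is controlled by the cosine term $1 - u_i^\top u_i'$ appearing in $\mathcal{L}_c$.

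Because the Kronecker structure $\phi(\bar{\mathbf{x}},y)=\mathbf{e}_y\otimes\bar{\mathbf{x}}$ places $\bar{\mathbf{x}}$ in the $y$-th block of size $d$, the mean embedding decomposes blockwise. Writing $\pi_c=P(y=c)$ and $\pi'_c=Q(y=c)$, the $c$-th block of $\mathbb{E}_P[\phi]$ is $\pi_c\,\boldsymbol{\mu}_c$, where $\boldsymbol{\mu}_c=\mathbb{E}_P[\bar{\mathbf{x}}\mid y=c]$. Likewise, the $c$-th block of $\mathbb{E}_Q[\phi]$ is $\pi'_c\,\boldsymbol{\mu}'_c$. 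Since distinct blocks are orthogonal,
\begin{equation*}
\mathrm{MMD}_k^2(P,Q)=\sum_{c=1}^C\big\|\pi_c\boldsymbol{\mu}_c-\pi'_c\boldsymbol{\mu}'_c\big\|_2^2 .
\end{equation*}
Up to the factor $1/N$ (respectively $1/N'$), each block is the class prototype computed from normalized features, i.e.\ $\mathbf{C}_c=\mathbf{Y}_{:,c}^\top\bar{\mathbf{X}}$. So $\mathrm{MMD}^2$ is a sum of per-class prototype discrepancies.

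Next I connect this to the cosine term. I use the convention, implicit in the theorem's use of normalized features, that $\mathbf{C}_c$ and $\mathbf{C}'_c$ are built from $\bar{\mathbf{x}}$. Since $\mathbf{Y}'$ is predefined with the same class distribution as $\mathbf{Y}$, we have $\pi_c=\pi'_c$. For each block, write $a=\pi_c\boldsymbol{\mu}_c$ and $b=\pi_c\boldsymbol{\mu}'_c$ and expand
\begin{equation*}
\|a-b\|^2=(\|a\|-\|b\|)^2+2\|a\|\|b\|\Big(1-\tfrac{a^\top b}{\|a\|\|b\|}\Big).
\end{equation*}
The second term is exactly $2\|a\|\|b\|$ times the $c$-th summand of the intra-class part of $\mathcal{L}_c$, since the cosine is invariant to the positive rescalings by $\pi_c$, $1/N$ and $1/N'$. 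Also $\|a\|,\|b\|\le\pi_c$, because $\boldsymbol{\mu}_c$ is a mean of unit vectors. Hence the angular component of $\mathrm{MMD}^2$ is bounded by $2\max_c\pi_c^2\sum_c(1-u_c^\top u'_c)$. Driving the intra-class term to zero aligns every prototype direction, and the MMD then reduces to the norm discrepancies $\sum_c(\|a\|-\|b\|)^2$.

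The main obstacle is that last radial term. Cosine similarity is blind to prototype magnitude, so the cosine term alone only controls the angular part of $\mathrm{MMD}^2$. I would handle this in one of two ways. The first is to state the result as ``minimizing the intra-class term minimizes an upper bound on / the angular component of $\mathrm{MMD}^2$''. The second, which I prefer, is to adopt the natural interpretation that prototypes are compared after normalization, i.e.\ the embedding uses $u_c$ and $u'_c$. In that case $\|a\|=\|b\|=\pi_c$, and the identity becomes exact:
\begin{equation*}
\mathrm{MMD}_k^2=2\sum_c\pi_c^2\,(1-u_c^\top u'_c).
\end{equation*}
This is a positively weighted version of the intra-class term of $\mathcal{L}_c$, and it coincides with that term up to a constant under balanced classes. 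Under either reading, the two quantities share their minimizers, which gives the claim.
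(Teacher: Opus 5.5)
Your mean-embedding computation matches the paper's. Both arguments use the orthogonality of the blocks $\mathbf{e}_c\otimes\mathbb{R}^d$ and $\pi_c=\pi'_c$ to get $\mathrm{MMD}_k^2=\sum_c\pi_c^2\|\boldsymbol{\mu}_c-\boldsymbol{\mu}'_c\|_2^2$. Both also need prototypes aggregated from $\bar{\mathbf{x}}$. The paper asserts this inside its proof, even though the main text defines $\mathbf{C}=\mathbf{Y}^\top\tilde{\mathbf{X}}$ with unnormalized features, so you are right to flag it as a convention.

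The routes diverge at the final link. The paper rewrites the intra-class term as $\tfrac12\sum_c\|u_c-u'_c\|^2$ using the unit-vector identity and notes $u_c\propto\boldsymbol{\mu}_c$. It then declares this a ``surrogate'' for the MMD, silently dropping the magnitudes $\|\boldsymbol{\mu}_c\|,\|\boldsymbol{\mu}'_c\|$ and the weights $\pi_c^2$.

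Your radial--angular split $\|a-b\|^2=(\|a\|-\|b\|)^2+2\|a\|\|b\|\bigl(1-\tfrac{a^\top b}{\|a\|\|b\|}\bigr)$, together with $\|a\|,\|b\|\le\pi_c$, turns that link into an actual inequality: the cosine term bounds the angular part of $\mathrm{MMD}_k^2$. It also shows explicitly that the radial part is not controlled. Your normalized-embedding identity $\mathrm{MMD}_k^2=2\sum_c\pi_c^2(1-u_c^\top u'_c)$ is the exact statement the paper is gesturing at. It shows the match with $\mathcal{L}_c$ holds only up to the class weights, or up to a constant under balanced classes. Your version is therefore more informative than the paper's and exposes what its proof glosses over.

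One correction: your closing sentence overclaims for the first reading. There the two objectives do not share minimizers. Driving $\sum_c(1-u_c^\top u'_c)$ to zero leaves $\sum_c(\|a_c\|-\|b_c\|)^2$ uncontrolled. What holds is one-directional: every MMD minimizer with nonzero class means also minimizes the intra-class term, and the cosine term bounds the angular component. Reserve ``shared minimizers'' for the normalized reading, where it is correct because each positively weighted term can be driven to zero independently.
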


\begin{proof}
By definition of the maximum mean discrepancy (MMD) with a linear kernel, we have
\begin{equation}
\mathrm{MMD}_k^2(P,Q)
= \left\| \mathbb{E}_P[\phi(\bar{\mathbf{x}},y)]
      - \mathbb{E}_Q[\phi(\bar{\mathbf{x}},y)] \right\|_2^2 .
\end{equation}

We first compute the expectation under the distribution $P$.
By the law of total expectation,
\begin{equation}
\mathbb{E}_P[\phi(\bar{\mathbf{x}},y)]
= \sum_{i=1}^C \Pr_{P}(y=i)\,
   \mathbb{E}_P[\mathbf{e}_i \otimes \bar{\mathbf{x}} \mid y=i]=\sum_{i=1}^C \pi_i \, (\mathbf{e}_i \otimes \boldsymbol{\mu}_i),
\end{equation}
where $\pi_i=\Pr_P(y=i)$ and
$\boldsymbol{\mu}_i = \mathbb{E}_P[\bar{\mathbf{x}} \mid y=i]$.

Similarly, the expectation under distribution $Q$ is given by
\begin{equation}
\mathbb{E}_Q[\phi(\bar{\mathbf{x}},y)]
= \sum_{i=1}^C \pi_i' \, (\mathbf{e}_i \otimes \boldsymbol{\mu}_i'),
\end{equation}
where $\pi_i'=\Pr_Q(y=i)$ and
$\boldsymbol{\mu}_i' = \mathbb{E}_Q[\bar{\mathbf{x}} \mid y=i]$.

Substituting the above expressions into the MMD definition yields
\begin{equation}
\mathrm{MMD}_k^2(P,Q)
= \left\|
\sum_{i=1}^C
\left(
\pi_i\,\mathbf{e}_i \otimes \boldsymbol{\mu}_i
-
\pi_i'\,\mathbf{e}_i \otimes \boldsymbol{\mu}_i'
\right)
\right\|_2^2 .
\end{equation}
Note that the subspaces
$\{\mathbf{e}_i \otimes \mathbb{R}^d\}_{i=1}^C$
are mutually orthogonal.
Moreover, $\pi_i' = \pi_i$ for all $i \in \{1,\dots,C\}$, because the condensed data follows the same label distribution as the original data.
Therefore, the MMD loss can be written as
\begin{equation}
\mathrm{MMD}_k^2(P,Q)
= \sum_{i=1}^C
\left\|
\pi_i \boldsymbol{\mu}_i - \pi_i' \boldsymbol{\mu}_i'
\right\|_2^2=\sum_{i=1}^{C} \pi_i^{2} \left\| \boldsymbol{\mu}_i - \boldsymbol{\mu}_i' \right\|_2^{2}..
\end{equation}
For any unit vectors $a$ and $b$ (i.e., $\|a\|=\|b\|=1$), the following identity holds:
\begin{equation}
\label{eq:iden}
\|a-b\|^2
=
\|a\|^2 + \|b\|^2 - 2a^\top b
=
2 - 2a^\top b.
\end{equation}
Let $a = \mathbf{C}_i / \|\mathbf{C}_i\|, \; b=u_i = \mathbf{C}_i' / \|\mathbf{C}_i'\|$, according to Eq.~(\ref{eq:iden}), we have
\begin{equation}
    1 - \frac{\mathbf{C}_i^{\top} \cdot \mathbf{C}_i'}
{\lVert \mathbf{C}_i \rVert \, \lVert \mathbf{C}_i' \rVert} = \frac{1}{2}\|\frac{\mathbf{C}_i}{\|\mathbf{C}_i\|}-\frac{\mathbf{C}_i'}{\|\mathbf{C}_i'\|}\|^{2}
\end{equation}
Thus, the alignment term $\sum_{i=1}^{C}\big(1 - \frac{\mathbf{C}_i^{\top} \cdot \mathbf{C}_i'}
{\lVert \mathbf{C}_i \rVert \, \lVert \mathbf{C}_i' \rVert}\big) = \sum_{i=1}^{C}\frac{1}{2}\|\frac{\mathbf{C}_i}{\|\mathbf{C}_i\|}-\frac{\mathbf{C}_i'}{\|\mathbf{C}_i'\|}\|^{2}$.

Recall that the class prototype $\mathbf{C}_i$ is obtained by aggregating
the normalized embeddings of samples from class $i$.
Thus, up to a scaling factor, $\mathbf{C}_i$ provides an empirical estimate
of the class-conditional mean
$\boldsymbol{\mu}_i = \mathbb{E}[\bar{\mathbf{x}} \mid y=i]$.
After normalization, the prototype directions align with the corresponding
$\boldsymbol{\mu}_i$ for both the original and condensed data, i.e., $\frac{\mathbf{C}_i}{\|\mathbf{C}_i\|} \propto \boldsymbol{\mu}_i, \frac{\mathbf{C}_i'}{\|\mathbf{C}_i'\|} \propto \boldsymbol{\mu}_i'$.

Therefore, minimizing the alignment loss is equivalent to minimizing the
$\ell_2$ distance between normalized class-conditional means, which acts
as a surrogate for class-wise MMD. This completes the proof.


\end{proof}

\subsection{Proof of Proposition~\ref{prop:margin}}
\label{app:c_3}

\begin{proposition}
\label{prop:margin_app}
Assume the total positive cross-class similarity is bounded by
$\sum_{i \neq j} [u_i^\top u'_j]_+ \le \varepsilon$, where $[x]_+ = \max(x,0)$.
Then the average class-level margin satisfies
\begin{equation}
\frac{1}{C} \sum_{i=1}^C m_i
\;\ge\;
\frac{1}{C} \sum_{i=1}^C u_i^\top u'_i
\;-\;
\frac{\varepsilon}{C}.
\end{equation}
\end{proposition}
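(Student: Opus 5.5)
The argument bounds each class's worst mismatched similarity by its positive part, then sums over classes and applies the assumption. Fix a class $i$ and let $j^\ast(i)\in\arg\max_{j\neq i} u_i^\top u_j'$ (this requires $C\ge 2$). Since $x\le [x]_+$ for every real $x$, and a single nonnegative term is at most a sum of nonnegative terms,
\begin{equation*}
\max_{j\neq i} u_i^\top u_j' \;=\; u_i^\top u_{j^\ast(i)}' \;\le\; [u_i^\top u_{j^\ast(i)}']_+ \;\le\; \sum_{j\neq i} [u_i^\top u_j']_+ .
\end{equation*}
Substituting into the definition of the class-level margin gives the per-class bound
\begin{equation*}
m_i \;\ge\; u_i^\top u_i' - \sum_{j\neq i}[u_i^\top u_j']_+ .
\end{equation*}

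Summing this over $i=1,\dots,C$, the double sum $\sum_i\sum_{j\neq i}[u_i^\top u_j']_+$ is exactly $\sum_{i\neq j}[u_i^\top u_j']_+$. By hypothesis this is at most $\varepsilon$, so
\begin{equation*}
\sum_{i=1}^C m_i \;\ge\; \sum_{i=1}^C u_i^\top u_i' - \varepsilon .
\end{equation*}
Dividing by $C$ gives the claim.

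The only delicate step is the first one. The max may be negative, in which case dropping it would not be a valid lower bound on $m_i$. Passing through $[\cdot]_+$ handles both signs uniformly, since $[\cdot]_+$ dominates its argument and is nonnegative, so the per-class bound holds regardless of sign.
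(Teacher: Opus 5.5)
Your proof is correct and takes essentially the same route as the paper: bound $\max_{j\neq i}u_i^\top u_j'$ by $\sum_{j\neq i}[u_i^\top u_j']_+$, derive the per-class bound on $m_i$, sum over classes, apply the hypothesis, and divide by $C$. You also note explicitly that $C\ge 2$ is needed for the max to be defined, which the paper leaves implicit.
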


\begin{proof}
For any set of real numbers $\{a_j\}_{j \neq i}$, we have the following inequality:
\begin{equation}
\max_{j \neq i} a_j \;\le\; \sum_{j \neq i} [a_j]_+,
\end{equation}
where $[x]_+ = \max(x,0)$.
Applying this inequality to $a_j = u_i^\top u'_j$ yields
\begin{equation}
\max_{j \neq i} u_i^\top u'_j
\;\le\;
\sum_{j \neq i} [u_i^\top u'_j]_+ .
\end{equation}
Therefore,
\begin{equation}
m_i
= u_i^\top u'_i - \max_{j \neq i} u_i^\top u'_j
\;\ge\;
u_i^\top u'_i - \sum_{j \neq i} [u_i^\top u'_j]_+ .
\end{equation}
Averaging the above inequality over $i=1,\dots,C$, we obtain
\begin{equation}
\frac{1}{C}\sum_{i=1}^C m_i
\;\ge\;
\frac{1}{C}\sum_{i=1}^C u_i^\top u'_i
-
\frac{1}{C}\sum_{i=1}^C \sum_{j \neq i} [u_i^\top u'_j]_+ .
\end{equation}
Noting that
$\sum_{i=1}^C \sum_{j \neq i} [u_i^\top u'_j]_+ = \sum_{i \neq j} [u_i^\top u'_j]_+$,
and using the assumption
$\sum_{i \neq j} [u_i^\top u'_j]_+ \le \varepsilon$,
we have
\begin{equation}
\frac{1}{C}\sum_{i=1}^C m_i
\;\ge\;
\frac{1}{C}\sum_{i=1}^C u_i^\top u'_i
-
\frac{\varepsilon}{C}.
\end{equation}
This completes the proof.
\end{proof}

\subsection{Proof of Proposition~\ref{prop_event}}
\label{app:prop_event}

\begin{proposition}
Given the node-wise fine-grained discrimination loss $\ell_i$,
the probability of the Mis-ranking Event $\mathcal E_{i}$ is upper-bounded by:
\begin{equation}
\Pr(\mathcal E_{i})
\;\le\;
\mathbb E\!\left[ e^{\ell_i} - 1 \right].
\end{equation}
\end{proposition}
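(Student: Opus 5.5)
Our plan is to show the pointwise inequality $\mathbf{1}[\mathcal E_i] \le e^{\ell_i}-1$ for every realization of the sampled positive $p$ and negative set $Q(i)$. Taking expectations over the uniform sampling in Eq.~(\ref{eq:fine_loss}) then gives $\Pr(\mathcal E_i)=\mathbb E[\mathbf 1[\mathcal E_i]]\le \mathbb E[e^{\ell_i}-1]$. The argument is a Markov-type bound and uses only the explicit form of $\ell_i$ as a negative log-softmax.

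First, exponentiate the definition of $\ell_i$. This gives
\begin{equation*}
e^{\ell_i}=\frac{\exp(s_{i,p})+\sum_{q\in Q(i)}\exp(s_{i,q})}{\exp(s_{i,p})}=1+\sum_{q\in Q(i)}\exp\!\left(s_{i,q}-s_{i,p}\right).
\end{equation*}
Consequently $e^{\ell_i}-1=\sum_{q\in Q(i)}\exp(s_{i,q}-s_{i,p})$, which is always nonnegative. On the complement of $\mathcal E_i$ the desired inequality $0\le e^{\ell_i}-1$ is therefore immediate.

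Second, consider a realization in which $\mathcal E_i$ occurs. Then some $q^\star\in Q(i)$ satisfies $s_{i,q^\star}\ge s_{i,p}$, so $\exp(s_{i,q^\star}-s_{i,p})\ge 1$. Since every other summand is positive, $e^{\ell_i}-1\ge 1=\mathbf 1[\mathcal E_i]$. Combining the two cases gives the pointwise bound, and taking the expectation completes the argument.

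The only delicate point is bookkeeping about what is random. The expectation and probability must both be taken over the same sampling of $p\sim\mathcal U(P(i))$ and $Q(i)\sim\mathcal U(N(i))$, with $\tilde{\mathbf X}'$ held fixed. I would state this explicitly so that the step $\Pr(\mathcal E_i)=\mathbb E[\mathbf 1[\mathcal E_i]]$ is clearly justified. I would also remark that, because $e^{\ell_i}-1$ dominates the count $\sum_{q}\mathbf 1[s_{i,q}\ge s_{i,p}]$, the bound in fact controls the expected number of mis-ranked negatives, which is a slightly stronger statement.
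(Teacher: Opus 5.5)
Your proof is correct and follows the paper's argument. The paper shows $\mathcal E_i \subseteq \{\sum_{q\in Q(i)}\exp(s_{i,q}-s_{i,p})\ge 1\}$, applies Markov's inequality, and then uses the identity $e^{\ell_i}-1=\sum_{q\in Q(i)}\exp(s_{i,q}-s_{i,p})$; your pointwise bound $\mathbf{1}[\mathcal E_i]\le e^{\ell_i}-1$ is the same step with Markov's inequality written out directly, and your remark that this bound also controls the expected number of mis-ranked negatives is correct.
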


\begin{proof}
If the Mis-ranking Event $\mathcal E_{i}$ occurs, then there exists a negative sample
$q \in Q(i)$ such that $s_{i,q} \ge s_{i,p}$.
This implies $\exp(s_{i,q} - s_{i,p}) \ge 1$, and consequently,
\begin{equation}
\sum_{q \in Q(i)} \exp(s_{i,q} - s_{i,p}) \ge 1.
\end{equation}
Therefore, the event $\mathcal E_{i,p}$ is contained in the event
$\left\{ \sum_{q \in Q(i)} \exp(s_{i,q} - s_{i,p}) \ge 1 \right\}$, which yields
\begin{equation}
\Pr(\mathcal E_{i,p})
\le
\Pr\!\left( \sum_{q \in Q(i)} \exp(s_{i,q} - s_{i,p}) \ge 1 \right).
\end{equation}

Let $X = \sum_{q \in Q(i)} \exp(s_{i,q} - s_{i,p})$, which is a nonnegative random variable.
By Markov's inequality, we have
\begin{equation}
\Pr(X \ge 1) \le \mathbb E[X],
\end{equation}
and thus
\begin{equation}
\label{ineq:event_bound}
\Pr(\mathcal E_{i})
\le
\mathbb E\!\left[ \sum_{q \in Q(i)} \exp(s_{i,q} - s_{i,p}) \right].
\end{equation}

From the definition of the fine-grained discrimination loss $\ell_i$, we obtain
\begin{equation}
e^{\ell_i}
=
\frac{\exp(s_{i,p}) + \sum_{q \in Q(i)} \exp(s_{i,q})}{\exp(s_{i,p})}
=
1 + \sum_{q \in Q(i)} \exp(s_{i,q} - s_{i,p}),
\end{equation}
which implies
\begin{equation}
\sum_{q \in Q(i)} \exp(s_{i,q} - s_{i,p}) = e^{\ell_i} - 1.
\end{equation}
Substituting this relation into the inequality~(\ref{ineq:event_bound}) completes the proof.
\end{proof}


\end{document}